\newcommand\argmin{{\mathrm{argmin}}}
\newcommand\R{{\mathbb R}}
\newcommand{\Ex}{\mathbb{E}}
\newcommand{\de}{\,\text{d}}
\newcommand\newsubcap[1]{\phantomcaption%
       \caption*{\figurename~\thefigure(\thesubfigure): #1}}
\newtheorem{theorem}{Theorem}[section]
\newtheorem{remark}{Remark}[section]
\title{Nonlinear denoising score matching for enhanced learning of structured distributions}
\author{Jeremiah Birrell\\
    Department of Mathematics\\
 Texas State University\\
 San Marcos, TX 78666, USA \\ 
  \texttt{jbirrell@txstate.edu} \\
   \And
    Markos A. Katsoulakis\\
    Department of Mathematics and Statistics\\
  University of Massachusetts Amherst\\
  Amherst, MA 01003, USA \\
  \texttt{markos@umass.edu} \\
\And
    Luc Rey-Bellet\\
    Department of Mathematics and Statistics\\
  University of Massachusetts Amherst\\
  Amherst, MA 01003,  USA \\
  \texttt{luc@umass.edu} \\
\And
Benjamin J. Zhang\\
Division of Applied Mathematics\\ 
Brown University\\
Providence, RI 02912, USA\\
\texttt{benjamin\_zhang@brown.edu}\\
  \And
    Wei Zhu\\
    School of Mathematics\\
 Georgia Institute of Technology\\
Atlanta, GA 30332,  USA \\
\texttt{weizhu@gatech.edu}
}
\begin{document}
\maketitle

\begin{abstract}
We present a novel method for training score-based generative models which uses nonlinear noising dynamics to improve learning of structured distributions.  Generalizing to a nonlinear drift allows for additional structure to be incorporated into the dynamics, thus making the training better adapted to the data,  e.g., in the case of multimodality or (approximate) symmetries.  Such structure can be obtained from the data by an inexpensive preprocessing step. The nonlinear dynamics introduces new challenges into training which we address in two ways: 1) we develop a new nonlinear denoising score matching (NDSM) method, 2) we introduce neural control variates in order to reduce the variance of the NDSM training objective. We demonstrate the effectiveness of this method on several examples: a) a collection of low-dimensional examples, motivated by clustering in latent space, b) high-dimensional images, addressing issues with mode imbalance, small training sets, and approximate symmetries, the latter being a challenge for methods based on equivariant neural networks, which require exact symmetries, c) latent space representation of high-dimensional data, demonstrating  improved performance with greatly reduced computational cost. Our method learns score-based generative models with less data by flexibly incorporating structure arising in the dataset. 
\end{abstract}

\keywords{Score-based generative modeling \and structure-preserving generative modeling \and denoising score-matching \and control variates \and learning from scarce data}

\section{Introduction}
Generative modeling is a rapidly evolving collection of techniques for learning high-dimensional probability distributions using samples  \cite{ruthotto2021introduction}.
Likelihood-free or simulation-based inference \cite{cranmer2020frontier} and surrogate models \cite{ZHU201956} provide  important and growing sets of applications of generative models \cite{dax2021real,chen2021mo,chen2022inverse,kastner2023gan,baptista2024bayesian,teng2024generative}. Since the introduction of score-based generative models (SGMs) \citep{song2020score,ho2020denoising} 
there has been intense interest in accelerating their training and generation processes. While being able to perform conditional sampling \cite{batzolis2021conditional} and produce high quality samples with minimal mode collapse \citep{xiao2021tackling},  SGM's generation process is comparably more expensive than generative adversarial networks (GANs) \citep{goodfellow2014generative} or normalizing flows \cite{grathwohl2018ffjord}.  In this paper, we introduce and develop the use of \emph{nonlinear} forward processes in SGMs. While the original formulation of SGMs by \citet{song2020score} included the possibility of nonlinear forward processes, their practical use has not been well-explored due to the great empirical success of linear forward processes. Here we find that, when designed wisely, nonlinear diffusion processes provide a way to incorporate structure about the target distribution, such as multimodality or \emph{approximate} symmetry, into the SGM to produce higher quality samples with less data.  

The class of forward processes we consider is inspired by overdamped Langevin dynamics (OLD). SGMs that use the Ornstein-Uhlenbeck process, the OLD of the normal distribution, as the forward process have been widely explored. In this work, we use more general OLD corresponding to \emph{Gaussian mixture models} (GMM). These GMMs can be learned cheaply via a preprocessing step on a subset of (unlabeled or sparsely labeled) data, and then used to define the drift for the forward process. The use of GMMs is well-motivated for SGMs. Gold standard implementations of generative models \citep{vahdat2021score,rombach2022high} perform SGMs in the latent space. As distributions in the latent space are often multimodal in scientific applications \citep{ding2019deciphering,li2023searching}, we argue GMMs are a natural choice for nonlinear forward processes. Furthermore, nonlinear drift terms based on OLDs corresponds with choosing reference measures other than the Gaussian. Via the \emph{optimal control} interpretation of SGM \citep{berner2022optimal,zhang2023mean} we can also interpret the nonlinear drift term  as arising from a state cost in the control problem.

Denoising score-matching (DSM) \citep{vincent2011connection,song2020score} is the most frequently used objective function for learning the score function as it avoids computing derivatives of the score. Their use, however, relies on knowing the probability transition kernel of the forward process, which is only possible for linear processes. Therefore, we develop a nonlinear version of DSM (NDSM) in Section \ref{sec:NDSM}. The main idea is that even for nonlinear drift functions, the \emph{local} transition probability functions are \emph{approximately} normal. Implementing NDSM is a challenge in itself. Our method is related to the local-DSM approach of \cite{singhals}. A key innovation in our analysis is introduction of a new variance-reduced  NDSM-loss in Theorem \ref{thm:NDSM}; this is achieved through identifying and canceling a   high-variance mean-zero term. We further build on this by proposing a novel \emph{neural control variates} method to produce low variance estimates of the objective function in Theorem \ref{thm:NDSM_CV}. Neural control variates are a deep learning version of classical control variates \citep{asmussen2007stochastic}, for variance reduction, and may be of independent interest elsewhere.

Numerical experiments on MNIST and its approximate $C_2$-symmetric variant validate our theoretical arguments. In particular, in Table~\ref{tab:result_mnist}, we see that the inception score (IS) and Fr\'{e}chet inception distance (FID) with our model vastly outperform the standard OU with DSM SGM. Moreover, we also show our model can learn generative models with substantially \emph{less data}, as shown in Figure \ref{fig:intro_a}. Our method is particularly well suited to latent space representations of high dimensional data that exhibit clustering, as they are a natural fit for the Gaussian mixture preprocessing step and the reduced dimensionality lowers the cost associated with simulating the nonlinear SDE, leading to GM+NDSM-CV also outperforming OU+DSM  on the basis of performance versus training time. We demonstrate this by the example in Section \ref{sec:MNIST_latent}, as previewed in Figure \ref{fig:intro_b}.

\begin{figure}
\begin{subfigure}{0.56\textwidth}
    \centering
    \includegraphics[width = .48\textwidth,trim = {45 45 45 45},clip]{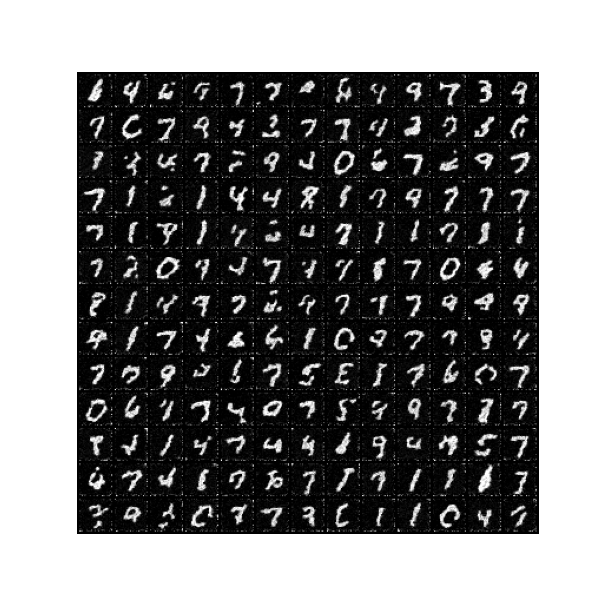}
    \includegraphics[width = .48\textwidth,trim = {45 45 45 45},clip]{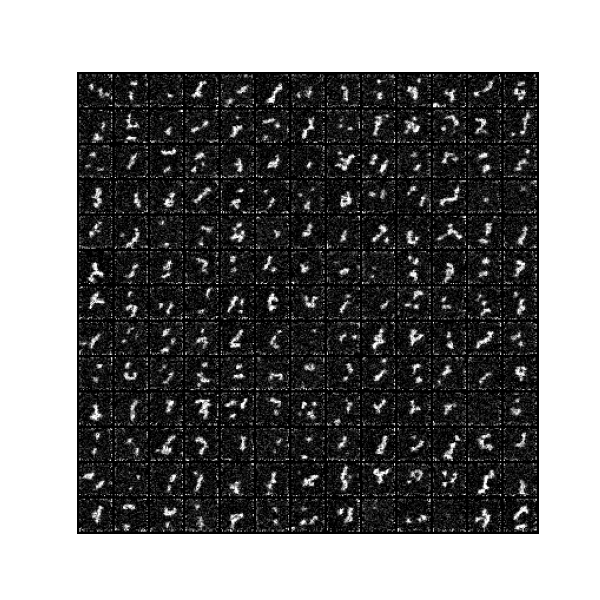}
    \subcaption{$N=6000$: NDSM  (left),   DSM (right)}\label{fig:intro_a}
    \end{subfigure}
        \begin{subfigure}{0.40\textwidth}
     \includegraphics[width = .98\textwidth]{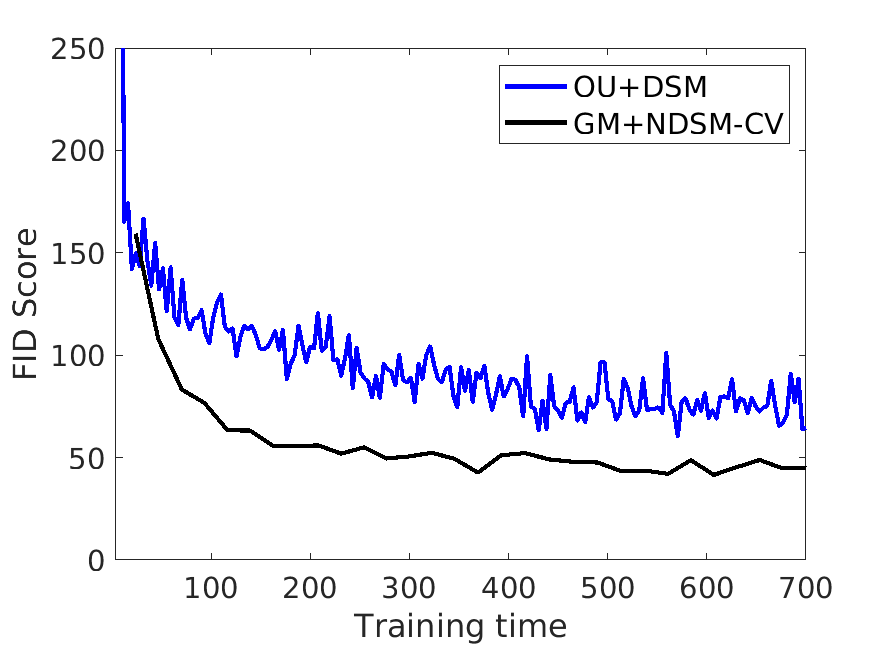}
     \subcaption{FID Score versus training time.}\label{fig:intro_b}
\end{subfigure} 
\caption{(a) MNIST in the low data regime ($N$ training samples), comparing OU+DSM with our new GM+NDSM-CV method. GM+NDSM-CV can learn well with less data. (b) While GM+NDSM-CV is more computationally expensive, it can still outperform OU+DSM in term of performance versus training time. }
\end{figure}

\subsection{Contributions}
\begin{itemize}
    \item We introduce score-based generative modeling with nonlinear forward noising processes as a method for enhanced learning of distributions with structure. Specifically, an inexpensive preprocessing step is applied to (a subset of) the data to construct a Gaussian mixture (GM) reference measure. This reference measure constitutes the initial distribution of the denoising process and also determines the nonlinear drift of the forward  noising process. The GM nonlinear drift, being informed by the structure of the data, then leads to improved performance. 
    \item NDSM, a nonlinear version of the denoising score-matching objective function, is introduced to facilitate the practical implementation SGMs with nonlinear drift term. A robust NDSM implementation in both the sample and latent spaces relies on introducing a varianced-reduced NDSM-loss and a novel neural control variates method.
    \item Numerical experiments validate our claims of improved performance, including better FID and inceptions scores as well as the ability to learn from fewer training samples and  a reduction in mode imbalance; the latter is especially important if the trained model is to be used in downstream tasks, such  computing statistics  or as a surrogate model in a stochastic optimization problem.
\end{itemize}

\subsection{Related work} 
Incorporating structure to accelerate learning of generative models has been extensively studied for a variety of generative algorithms. In particular, \emph{equivariant generative models} such as structure-preserving GANs \cite{birrell2022structure}, equivariant normalizing flows \cite{kohler2020equivariant,garcia2021n}, and equivariant SGMs \cite{lu2024structure} and diffusion models \cite{hoogeboom2022equivariant}, have been empirically shown to be beneficial for learning distributions with \textit{exact} group symmetry. These successes have been supported by theoretical analyses \cite{chen2023sample,chen2023statistical}. Moreover, structure in a broader sense, such as mathematical structure, for designing and training generative models has been more frequently used for enabling faster training with limited data \cite{gu2024lipschitzregularizedgradientflowsgenerative,
zhang2024wasserstein,gu2024combining}. 

In contrast to equivariant score-based and diffusion models, the nonlinear noising dynamics and GMM prior proposed in the present work is less restrictive in its assumptions and is therefore able to handle distributions with \emph{approximate symmetries},  by which we mean that the action of a group on the data distribution $\pi$ produces distributions close (but not equal) to $\pi$, according to some notion of closeness between distributions. We also acknowledge recent work \cite{singhals},  which proposes a local-DSM for SGMs with nonlinear noising dynamics. Our work contributes additional methodological improvements to manage the increased variance encountered when estimating the NDSM objective function, which is particularly crucial when the timestep in the forward and backward SDEs is small; we show that incorporating this variance reducing term results in a substantial increase in the performance of the method. Furthermore, the nonlinear process we use is learned from the training data through a cost-effective preprocessing step. See Section 6 for further details.

\section{Score-based generative models with nonlinear noising dynamics} 

Let $\pi$ be the target data distribution on $\R^d$ known only through a finite set of samples $\{y_i\}_{i  = 1}^N$. Score-based generative modeling considers a pair of diffusion processes whose evolution are time reversals of each other. Given a drift vector field $f: \R^d \times [0,T] \to \R^d$ and  a diffusion coefficient $\tilde\sigma: [0,T] \to \R$, these diffusion processes $Y(s)$ and $X(t)$ are defined on the time interval $s,t \in [0,T]$ by
\begin{align}\label{eq:forward_backward_SDEs}
    \begin{dcases}
        dY(s) = -f(Y(s),T-s) ds + \tilde\sigma(T-s) dW(s),\,Y(0) \sim \pi \\
        dX(t) = (f(X(t),t) + \tilde\sigma(t)^2 \nabla \log \eta(X(t),T-t ) dt + \tilde\sigma(t) dW(t), \,X(0) \sim \rho_0,
    \end{dcases}
\end{align}
where $Y(s) \sim \eta(\cdot,s)$ and $\rho_0$ is some initial reference measure. If $\rho_0 = \eta(\cdot,T)$, then $X(t)$ exactly follows the time reversed evolution of $Y(s)$, i.e., $X(t) \sim \eta(\cdot,T-t)$. Typically, $f$ and $\tilde\sigma$ are chosen such that $Y(s)$ evolves to a Gaussian as quickly as possible. Linear SDEs such as the Ornstein-Uhlenbeck process (OU) ($f(x,t) = x/2$, $\tilde\sigma(t) = 1$), ``variance exploding'' processes ($f(x,t) = 0$, with $\tilde\sigma(t)$ growing quickly in $t$), or the critically damped Langevin process \cite{dockhorn2021score} (where the diffusion coefficient $\sigma(t)$ is in matrix form), are the most common choices in practice. Moreover, the score function is learned via a score-matching (SM) objective, typically the denoising SM.


We choose nonlinear noising dynamics arising from the overdamped Langevin dynamics \cite{pavliotis2014stochastic}. If the noising process corresponded with a Langevin process with stationary distribution $\rho_0 \propto \exp(-V(x))$, then it would correspond to a drift term $f = -\nabla V(x)$ and $\tilde\sigma(t) = \sqrt{2}$. In this paper, we choose $\rho_0$ to be a  Gaussian mixture model (GMM), and choose the nonlinear noising dynamics accordingly. However, we emphasize that the NDSM method we develop can be applied to any nonlinear drift.


\subsection{Gaussian mixture forward dynamics}\label{sec:GM_noising}

We use the freedom to choose the drift, $f$, in \eqref{eq:forward_backward_SDEs} to match the noising dynamics to the structure of the data.  Specifically, we fit a GMM with weights $w_i$, means $\mu_i$, and covariances $\Sigma_i$ to (a subset of) the data as a preprocessing step; we note that the  required preprocessing does not require labeled data.  More specifically, our implementation uses   the \texttt{mixture.GaussianMixture} and \texttt{mixture.BayesianGaussianMixture} methods from scikit-learn \cite{scikit-learn}; the former uses a fixed number of modes, as specified by the user, while the latter infers the appropriate number of modes from the data. We emphasize that we do not pre-specify the mode weights, means, or covariances; they are fit from a subset of the data. That subset is chosen to be small enough that this preprocessing step makes up a negligible portion of the overall computational cost in all of our tests.

The density
\begin{align}\label{eq:eta_star}
 &\eta_*(y)=\sum_{i=1}^K w_i N_{\mu_i,\Sigma_i}(y) \,\,\,\,\text{where } \,\,\,\, N_{\mu_i,\Sigma_i}(y)\coloneqq(2\pi)^{-d/2}\det(\Sigma_i)^{-1/2} \exp\left(-\frac{(y- \mu_i)\cdot \Sigma_i^{-1}(y-\mu_i)}{2}\right) 
\end{align}
is then invariant  under the GM noising dynamics
\begin{align}\label{eq:GM_nosing} 
    dY(s) = -\nabla V(Y(s))ds + \sqrt{2}dW(s), \quad V(y) \coloneqq -\log \left[\sum_{i=1}^K w_i N_{\mu_i,\Sigma_i}(y)\right].
\end{align}
The  noising dynamics \eqref{eq:GM_nosing}, together with the corresponding denoising dynamics and initial distribution $\rho_0=\eta_*$,  encodes important aspects of the structure of the data, such as multimodality and (approximate) symmetries. We demonstrate that this leads to improved performance, especially when using a small training set, and also helps to prevent mode imbalance.  Pseudocode for the corresponding noising and denoising dynamics, using the Euler-Maruyama (EM) discretization for a given choice of timesteps $\Delta t_n$, $n=0,...,n_f-1$ can be found in Algorithms  \ref{alg:forward_noising} and \ref{alg:denoising} respectively.  
\begin{algorithm}
\caption{Forward Noising Dynamics (given the data distribution $\pi$)}\label{alg:forward_noising}
\begin{algorithmic}[1]
\State Sample $y_1,...,y_B$ from $\pi$
\State $y_{j,0}=y_j$
\For{$n=0,..., n_f-1$}
    \State Sample $Z_{n+1}\sim N(0,I)$
    \State $y_{i,n+1}=y_{i,n}-\nabla V(y_{i,n})\Delta t_n+\sqrt{2\Delta t_n}Z_{n+1}$
\EndFor
\end{algorithmic}
\end{algorithm}

\begin{algorithm}
\caption{Denoising Dynamics (given the trained score model $s_\theta$)}\label{alg:denoising}
\begin{algorithmic}[1]
\State Sample $x_1,...,x_B$ from $\eta_*$ \,\,\,(see Eq.~\ref{eq:eta_star})
\State $x_{j,0}=x_j$
\For{$n=0,..., n_f-1$}
    \State Sample $Z_{n+1}\sim N(0,I)$
    \State $x_{i,n+1}=x_{i,n}+(\nabla V(x_{i,n})+2s_\theta(x_{i,n},T-t_n))\Delta t_n+\sqrt{2\Delta t_n}Z_{n+1}$
\EndFor
\end{algorithmic}
\end{algorithm}
The noising dynamics start in the data distribution, $\pi$, while the denoising dynamics start in the Gaussian mixture prior $\eta_*$ \eqref{eq:eta_star}, which is the invariant distribution for the noising dynamics. The parameters for $\eta_*$ are obtained by fitting a Gaussian mixture model to some fraction of the training data as described above.

\paragraph{Structure-preserving properties} 
 The reference $\rho_0$ is typically chosen so that the noising process respects certain aspects of the structure of the true distribution; this provides an alternate way to impose structure in SGMs. Approaches based on equivariant neural networks work well when the target distribution has exact symmetries. But imposing this type of rigid structure \emph{a priori}, such as in \cite{lu2024structure,hoogeboom2022equivariant} may excessively constraint the model when the distribution only exhibits approximate symmetry.

\paragraph{Faster convergence to (quasi-)stationary distribution}
The GMM $\rho_0=\eta_*$, \eqref{eq:eta_star}, is learned from samples of $\pi$, so we anticipate $\pi$ to be closer to $\rho_0$ in the Kullback-Leibler divergence or total variance distance than the normal distribution used by DSM with linear dynamics, and thus $\pi$ will converge to $\rho_0$ more quickly under the forward noising dynamics. More specifically, we note that  reaching stationarity in multimodal distributions may be quite slow, thus it is more reasonable to assume that the forward process converges to a \emph{quasi-stationary} distribution of the process quickly \cite{lelievre2022quasi,collet2013quasi}. This quasi-stationary distribution can still effective serve as the reference measure, especially when combined with an appropriate weighting of the modes as in our GMM fitting procedure.  In short, we do not need samples to escape ``their mode'' of the GMM and explore the others in order to get good performance, we only need a good approximation to stationarity within each mode, together with appropriate mode weights. 


\paragraph{Optimal control interpretation}
   The (mean-field) control formulation of score-based generative models \cite{berner2022optimal,zhang2023mean,zhang2024wasserstein} provides further insight and interpretations of our method. SGMs are optimizers of the control problem
    \begin{align}
        &\min_{v,\rho} \left\{- \hspace{-5pt}\int_{\R^d} \rho(x,T) \log \pi(x) \de x+ \int_0^T \hspace{-5pt}\int_{\R^d} \left(\frac{1}{2}|v(x,t)|^2 - \nabla \cdot f(x,t) \right) \rho(x,t) \de x \de t\right\} \\
        &\text{s.t. }\, \partial_t \rho + \nabla \cdot( (f + \tilde\sigma v)\rho) = \frac{\tilde\sigma^2}{2} \Delta \rho,\, \rho(x,0) = \rho_0(x)\,. \nonumber
    \end{align}
    Note that in practice, one can view the minimization  as being  over $v$, with $\rho$  determined by an SDE whose drift depends on $v$.    Here, $\nabla \cdot f$ can be interpreted as a \emph{state cost}, i.e., the cost incurred by being at a particular location in the state space. For typical choices of $f$ (linear functions), this state cost is zero or constant in space, meaning that no region of space is preferred over any other. When $f$ is a nonlinear function of space, then $\nabla\cdot f$ discourages the solution to visit regions of high $\nabla \cdot f$ value. In particular, our choice of $f$ is based on Langevin dynamics, meaning that the state cost is of the form $\nabla\cdot f(x,t) = - \Delta V(x)$, the Laplacian of a potential function. Areas of strict convexity are penalized more than areas where $V$ is concave or where $\Delta V$ is small. These geometric interpretations may provide insight in designing $f$ in future investigations.






\section{Nonlinear denoising score matching}

\label{sec:NDSM}
Due to the nonlinearity of the dynamics in \eqref{eq:GM_nosing}, the exact transition probabilities are not known and therefore standard denoising score matching cannot be used. In this section we develop a novel {\bf nonlinear denoising score matching (NDSM)} method which can be used to train generative models with nonlinear forward noising dynamics.  The method leverages the fact that over a short timespan from $t_n$ to $t_{n+1}$ the transition probabilities for a nonlinear SDE  are approximately normal:
\begin{align}\label{eq:transition_prob_Gaussian}
p_{n+1}(dy_{n+1}|y_n)\sim N(\mu(y_n,t_n,\Delta t_n),\sigma^2(y_n,t_n,\Delta t_n)I)\,,
\end{align}
where $\Delta t_n=t_{n+1}-t_{n}$ (up to a final time $t_{n_f}=T$), for an appropriate mean $\mu$ and covariance $\sigma^2$.  Specifically, we use the Euler-Maruyama method which, for the SDE for $Y(s)$ in \eqref{eq:forward_backward_SDEs},  corresponds to
\begin{align}\label{eq:mu_sigma_def}
   \mu(y_n,t_n,\Delta t_n)=y_n-f(y_n,T-t_n)\Delta t_n\,,\,\,\,\sigma(y_n,t_n,\Delta t_n)= \tilde{\sigma}(T-t_n)\sqrt{\Delta t_n}\,.
\end{align}

In the following theorem we derive the  NDSM score matching objective  under the assumption that the transition probabilities have the form \eqref{eq:transition_prob_Gaussian}; in particular, our result applies to general nonlinear $f$ and not just a GMM.  One key feature that distinguishes the derivation of NDSM from standard DSM is that here we  add a specific mean-zero term to the objective in order to cancel a singularity that arises in the limit where the step-size $\Delta t\to 0$. Choosing $\Delta t$ to be small is required for accuracy of the simulation and so this innovation, which  dramatically reduces the variance of the objective, is necessary to  obtain a result that can be used in practice. 
\begin{theorem}[Nonlinear DSM]\label{thm:NDSM}
Let $Z_n\sim N(0,I)$, $n\in\mathbb{Z}^+$, $Y_0\sim \pi$ be independent and define 
\begin{align}\label{eq:Y_markov}
  Y_{n+1}=\mu(Y_n,t_n,\Delta t_n)+\sigma(Y_n,t_n,\Delta t_n) Z_{n+1}\,,\,\,\,n\geq0\,,  
\end{align}
so that $Y_n$ is a Markov process with  one-step transition probabilities  \eqref{eq:transition_prob_Gaussian}  and  initial distribution $\pi$. Denote the distribution of $Y_n$ by $\eta_n$ and its density by $\eta_n(y)$.    Let $N$ be a random timestep, valued in $\{1,...,n_f\}$ and  independent from  $Y_0$ and the $Z_n$'s. Then the score-matching optimization problem can be rewritten as follows:
 \begin{align}\label{eq:NDSM_loss}
&\argmin_\theta \frac{1}{2}\mathbb{E}\left[\|s_\theta(Y_{N},t_{N})-\nabla_y|_{Y_{N}}\log(\eta_{N}(y))\|^2\right]= \argmin_\theta\mathbb{E}\left[\mathcal{L}^{\mathrm{NDSM}}_{\theta,N}\right]\,,
\end{align}
where the NDSM loss is given by
\begin{align}\label{eq:NDSM_loss_def}
&\mathcal{L}^{\mathrm{NDSM}}_{\theta,N}\coloneqq \frac{1}{2}\|s_\theta(Y_{N},t_{N})\|^2+  \frac{1}{\sigma_{N-1}}Z_{N}\cdot(s_\theta(Y_{N},t_{N})-s_\theta(\mu_{N-1},t_{N}))\,,\\
&\mu_{n}\coloneqq \mu(Y_{n},t_{n},\Delta t_{n})\,,\,\,\,\sigma_{n}\coloneqq \sigma(Y_{n},t_{n},\Delta t_{n})\,.\notag
\end{align}
\end{theorem}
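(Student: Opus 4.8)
The plan is to expand the squared norm in the score-matching objective, reduce the $\theta$-dependent part to a single cross term via a conditional version of the denoising (Vincent) identity, exploit the explicit Gaussian form of the one-step transition kernel, and finally verify that the extra $s_\theta(\mu_{N-1},t_N)$ piece is mean-zero. First I would condition on the random timestep $N=n$; since $N$ is independent of $Y_0$ and the $Z_k$, all manipulations can be performed at fixed $n$ and then averaged over the law of $N$. Expanding
$$\tfrac12\|s_\theta(Y_n,t_n)-\nabla\log\eta_n(Y_n)\|^2=\tfrac12\|s_\theta(Y_n,t_n)\|^2-s_\theta(Y_n,t_n)\cdot\nabla\log\eta_n(Y_n)+\tfrac12\|\nabla\log\eta_n(Y_n)\|^2,$$
the last term is independent of $\theta$ and may be discarded without affecting the $\argmin$.

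For the cross term I would use the marginalization $\eta_n(y)=\int\eta_{n-1}(y')p_n(y\mid y')\,dy'$ to write $\nabla_y\eta_n(y)=\int\eta_{n-1}(y')p_n(y\mid y')\nabla_y\log p_n(y\mid y')\,dy'$, so that
$$\mathbb{E}[s_\theta(Y_n,t_n)\cdot\nabla\log\eta_n(Y_n)]=\mathbb{E}[s_\theta(Y_n,t_n)\cdot\nabla_{Y_n}\log p_n(Y_n\mid Y_{n-1})].$$
Because the one-step kernel is Gaussian with mean $\mu_{n-1}$ and variance $\sigma_{n-1}^2$, I have $\nabla_{y_n}\log p_n(y_n\mid y_{n-1})=-(y_n-\mu_{n-1})/\sigma_{n-1}^2$, and since $Y_n=\mu_{n-1}+\sigma_{n-1}Z_n$ this collapses to $-Z_n/\sigma_{n-1}$. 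Substituting turns the cross term into $\mathbb{E}[\sigma_{n-1}^{-1}Z_n\cdot s_\theta(Y_n,t_n)]$, recovering the naive nonlinear DSM loss.

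To pass to the variance-reduced form I would add the term $-\sigma_{N-1}^{-1}Z_N\cdot s_\theta(\mu_{N-1},t_N)$ and show it has zero expectation. Conditioning on the $\sigma$-algebra $\mathcal{F}_{n-1}$ generated by $Y_0,Z_1,\dots,Z_{n-1}$, both $\mu_{n-1}$ and $\sigma_{n-1}$ are measurable while $Z_n$ is independent with $\mathbb{E}[Z_n\mid\mathcal{F}_{n-1}]=0$; hence $\mathbb{E}[\sigma_{n-1}^{-1}Z_n\cdot s_\theta(\mu_{n-1},t_n)]=0$. Adding a mean-zero, $\theta$-dependent term leaves the expectation—and therefore the $\argmin$—unchanged, yielding exactly $\mathbb{E}[\mathcal{L}^{\mathrm{NDSM}}_{\theta,N}]$ and completing the argument.

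I expect the main obstacle to be the rigorous justification of the denoising identity in the second step: making precise the differentiation under the integral sign and the vanishing of the boundary contributions under the Gaussian-transition hypothesis and mild decay/integrability of $\eta_{n-1}$. The mean-zero cancellation and the handling of the random timestep are comparatively routine; the genuine content—and the reason the added term is legitimate rather than merely convenient—is that it is a martingale-increment correction that removes the $\sigma_{n-1}^{-1}$ singularity as $\Delta t\to0$ (via $s_\theta(Y_n,t_n)-s_\theta(\mu_{n-1},t_n)=O(\sigma_{n-1})$) without perturbing the minimizer of the objective.
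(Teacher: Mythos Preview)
Your proposal is correct and follows essentially the same approach as the paper: expand the square, drop the $\theta$-independent Fisher term, convert the cross term via the denoising identity using the Gaussian one-step kernel to obtain $\mathbb{E}[\sigma_{n-1}^{-1}Z_n\cdot s_\theta(Y_n,t_n)]$, and then subtract the mean-zero correction $\sigma_{n-1}^{-1}Z_n\cdot s_\theta(\mu_{n-1},t_n)$ (justified by independence of $Z_n$ from $\mathcal{F}_{n-1}$) before averaging over $N$. The only cosmetic difference is that the paper writes out the full chain $p_n(y\mid y_{n-1})\cdots p_1(y_1\mid y_0)\pi(dy_0)$ to reach the denoising identity, whereas you use the equivalent one-step marginalization $\eta_n=\int\eta_{n-1}\,p_n$; your identification of the differentiation-under-the-integral step as the only nontrivial analytic point is also on target.
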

\begin{proof}
We will first consider the objective at a fixed time $t_n$. First expand the squared norm
 \begin{align}\label{eq:NDSM_loss_derivation1}
& \frac{1}{2}\Ex_{\eta_{n}}\left[\|s_\theta(y_n,t_n)-\nabla_y|_{y_n}\log(\eta_n(y))\|^2\right]\\
=&  \frac{1}{2}\Ex_{\eta_n}\left[\|s_\theta(y_n,t_n)\|^2\right]-\Ex_{\eta_n}[s_\theta(y_n,t_n)\cdot\nabla_{y}|_{y_n}\log(\eta_n(y))] +\frac{1}{2}\Ex_{\eta_n}\left[\|\nabla_y|_{y_n}\log(\eta_n(y))\|^2\right]\,,\notag
\end{align}
where $\nabla_y|_{y_n}$ denotes the gradient with respect to the variable $y$, evaluated at $y_n$. 

As in standard DSM, the  term on the right-hand side of \eqref{eq:NDSM_loss_derivation1} which does not depend on $\theta$ can be ignored as it does not impact the minimization over $\theta$. Focusing on the second term, which does depend on $\theta$ and also still contains the unknown density $\eta_n(y_n)$, we can write
\begin{align}
&\Ex_{\eta_n}[s_\theta(y_n,t_n)\cdot\nabla_{y}|_{y_n}\log(\eta_n(y))]
=\int s_\theta(y_n,t_n)\cdot\nabla_{y}|_{y_n}\eta_n(y)dy_n\\
=&\int s_\theta(y_n,t_n)\cdot\left(\nabla_{y}|_{y_n}\!\int\!\!...\!\!\int\!\!\int   p_{n}(y|y_{n-1})...p_{1}(y_1|y_0) \pi(dy_0)dy_1...dy_{n-1}\!\right)\!dy_n\notag\\
=&\int \!\!...\!\!\int\!\!\int  \! s_\theta(y_n,t_n)\cdot\nabla_y|_{y_n}p_{n}(y|y_{n-1})...p_{1}(y_1|y_0) \pi(dy_0)dy_1...dy_n\notag\\
=&\mathbb{E}\left[  s_\theta(Y_n,t_n)\cdot\nabla_y|_{Y_n}\log(p_{n}(y|Y_{n-1}))\right]\,.\notag
\end{align}
The  assumption \eqref{eq:Y_markov} implies
\begin{align}\label{eq:Y_Z_eq}
    Y_{n}=\mu_{n-1}+\sigma_{n-1} Z_{n}\,,
\end{align} 
where $\mu_{n-1}\coloneqq \mu(Y_{n-1},t_{n-1},\Delta t_{n-1})$, $\sigma_{n-1}\coloneqq \sigma(Y_{n-1},t_{n-1},\Delta t_{n-1})$, and therefore 
\begin{align}
  -\mathbb{E}\left[  s_\theta(Y_n,t_n)\cdot\nabla_y|_{Y_n}\log(p_{n}(y|Y_{n-1}))\right]  =&-\mathbb{E}\left[  s_\theta(Y_n,t_n)\cdot\nabla_y|_{Y_n}\left(  -\|y-\mu_{n-1}\|^2/(2\sigma_{n-1}^2)\right)\right]\notag\\
  =&\mathbb{E}\left[  s_\theta(Y_n,t_n)\cdot Z_n/\sigma_{n-1}\right]\,.\label{eq:NDSM_loss_derivation2}
\end{align}

To motivate the next step in the derivation we note that, when estimating \eqref{eq:NDSM_loss_derivation2} from samples, the $\sigma_{n-1}$ in the denominator can cause severe numerical problems, i.e., an extremely large variance, due to $\sigma_{n-1}$ becoming  small when $\Delta t_{n-1}$ is small; for instance, see \eqref{eq:mu_sigma_def}. To further clarify this issue  we perform the following formal calculations. Expand the score for small $\sigma_{n-1}$: 
\begin{align}
s^i_\theta(Y_n,t_n)=&s^i_\theta(\mu_{n-1}+\sigma_{n-1} Z_n,t_n)\\
  =&s^i_\theta(\mu_{n-1},t_n)+\nabla_y s^i_\theta(\mu_{n-1},t_n)\cdot \sigma_{n-1} Z_n+O(\sigma_{n-1}^2)\,.\notag
\end{align}
Therefore
\begin{align}\label{eq:bad_term_expansion}
 s_\theta(Y_n,t_n)\cdot Z_n/\sigma_{n-1}
=  s_\theta(\mu_{n-1},t_n)\cdot Z_n/\sigma_{n-1}+Z_n\cdot\nabla_y &s_\theta(\mu_{n-1},t_n)\cdot  Z_n   +O(\sigma_{n-1})\,.
\end{align}
Recalling   that $Z_n$ is independent of $Y_{n-1}$ we see that the expectation of the first term in \eqref{eq:bad_term_expansion} vanishes, however its variance diverges as $\sigma_{n-1}\to 0$. The other terms are well behaved as $\sigma_{n-1}\to 0$.  Therefore we have isolated the  troublesome behavior of \eqref{eq:NDSM_loss_derivation2} as originating from
\begin{align}\label{eq:W_def}
  W_{\theta,n}\coloneqq s_\theta(\mu_{n-1},t_n)\cdot Z_n/\sigma_{n-1}\,.
\end{align}
To obtain a numerically well-behaved objective we therefore subtract $W_{\theta,n}$  (which has expected value zero) from objective in \eqref{eq:NDSM_loss_derivation2} and then substitute the result into \eqref{eq:NDSM_loss_derivation1} to obtain
 \begin{align}\label{eq:NDSM_loss_derivation3}
& \frac{1}{2}\mathbb{E}\left[\|s_\theta(Y_n,t_n)-\nabla_y|_{Y_n}\log(\eta_n(y))\|^2\right]\\
=& \mathbb{E}\left[\frac{1}{2}\|s_\theta(Y_n,t_n)\|^2+  (s_\theta(Y_n,t_n)\cdot Z_n/\sigma_{n-1}-W_{\theta,n})\right]+\frac{1}{2}\mathbb{E}\left[\|\nabla_y|_{Y_n}\log(\eta_n(y))\|^2\right]\,.\notag
\end{align}
Taking the expectation over a random timestep $N$ (that is independent from the $Y_n$'s and $Z_n$'s) and substituting in \eqref{eq:W_def}   gives  
 \begin{align}\label{eq:NDSM_loss_derivation4}
& \frac{1}{2}\mathbb{E}\left[\|s_\theta(Y_{N},t_{N})-\nabla_y|_{Y_{N}}\log(\eta_{N}(y))\|^2\right]\\
=& \mathbb{E}\left[\frac{1}{2}\|s_\theta(Y_{N},t_{N})\|^2+  \frac{1}{\sigma_{N-1}} Z_{N}\cdot(s_\theta(Y_{N},t_{N})-s_\theta(\mu_{N-1},t_N))\right]+\frac{1}{2}\mathbb{E}\left[\|\nabla_y|_{Y_{N}}\log(\eta_{N}(y))\|^2\right]\,.\notag
\end{align}
Finally, minimizing  over $\theta$ and noting that the last term is independent of $\theta$ we arrive at \eqref{eq:NDSM_loss}.
\end{proof}
\begin{remark}
    A key practical difference between NDSM and standard DSM is that the sample trajectories must be simulated over a sequence of timesteps $t_n$; this is due to the nonlinear drift, which precludes us from having a general formula for the transition probabilities.  To increase computational efficiency we therefore find it advantageous to use multiple (random) timesteps from each sample trajectory in the loss, thereby reducing the number of trajectories that must be simulated for a given loss minibatch size.  One can also improve efficiency by computing in parallel a large batch of trajectories and sampling from them (with our without replacement), repeating this after every appropriate number of epochs.
\end{remark}

\subsection{Nonlinear DSM  with Neural Control-Variates}
In the derivation of \eqref{eq:NDSM_loss}, we used  the mean-zero term \eqref{eq:W_def} to prevent the variance of the NDSM objective from diverging as $\Delta t\to 0$. However, we can make further use of \eqref{eq:W_def} by introducing  additional learnable parameters, as in the method of control variates, see, e.g., Section 6.7.1 in \cite{rubinstein2009simulation}, to further reduce the variance. This leads us to propose the following {\bf nonlinear DSM  with control-variates (NDSM-CV)} method.
\begin{theorem}[Neural Control-Variates]\label{thm:NDSM_CV}
Let $Y_n$, $Z_n$, and $N$ be as in Theorem \ref{thm:NDSM}. For any continuous $\epsilon:[0,T]\to\mathbb{R}$, we have the following equivalence between the NDSM score-matching problem and the NDSM-CV problem:
\begin{align}\label{eq:NDSM_with_CV_optim}
  &\argmin_\theta \mathbb{E}[\mathcal{L}^{\mathrm{NDSM}}_{\theta,N}+\epsilon(t_N) W_{\theta,N}]   =\argmin_\theta \mathbb{E}[\mathcal{L}^{\mathrm{NDSM}}_{\theta,N}] \,,\\
  &W_{\theta,n}\coloneqq s_\theta(\mu_{n-1},t_n)\cdot Z_n/\sigma_{n-1}\,,\label{eq:W_def2}
\end{align}
where the NDSM loss was defined in \eqref{eq:NDSM_loss} and $\mu_n$, $\sigma_n$ were defined in \eqref{eq:NDSM_loss_def}.
\end{theorem}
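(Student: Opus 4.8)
The plan is to show that adding the term $\epsilon(t_N) W_{\theta,N}$ leaves the objective unchanged for \emph{every} $\theta$, so that the two optimization problems are literally identical as functions of $\theta$ and hence trivially share the same minimizer. Concretely, I would prove that $\mathbb{E}[\epsilon(t_N) W_{\theta,N}] = 0$ for all continuous $\epsilon$ and all $\theta$; the identity of the two objectives, and therefore the equality of argmins in \eqref{eq:NDSM_with_CV_optim}, then follows immediately by linearity of expectation. This is exactly the control-variate principle: a mean-zero term is attached with a tunable (here time-dependent) coefficient to reduce variance without perturbing the expected loss.

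The core of the argument is that $W_{\theta,n}$ is conditionally mean-zero. First I would fix the timestep $n$ and condition on $Y_{n-1}$. Since $\mu_{n-1}=\mu(Y_{n-1},t_{n-1},\Delta t_{n-1})$ and $\sigma_{n-1}$ are deterministic (measurable) functions of $Y_{n-1}$, while $Z_n\sim N(0,I)$ is independent of $Y_{n-1}$ with mean zero, we obtain
\begin{align}
\mathbb{E}[W_{\theta,n}\mid Y_{n-1}] = \frac{1}{\sigma_{n-1}}\, s_\theta(\mu_{n-1},t_n)\cdot \mathbb{E}[Z_n] = 0,
\end{align}
and hence $\mathbb{E}[W_{\theta,n}]=0$ for each fixed $n$ and each $\theta$. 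This is precisely the vanishing-expectation property already observed for the first term of \eqref{eq:bad_term_expansion} in the proof of Theorem \ref{thm:NDSM}, so it can simply be invoked rather than re-derived.

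Next I would account for the random timestep and the time-dependent coefficient by conditioning on $N$. Because $N$ is independent of $Y_0$ and the $Z_n$'s, conditioning on the event $\{N=n\}$ fixes $\epsilon(t_N)=\epsilon(t_n)$ as a deterministic scalar and leaves the law of $W_{\theta,n}$ unchanged, giving $\mathbb{E}[\epsilon(t_N)W_{\theta,N}\mid N=n]=\epsilon(t_n)\,\mathbb{E}[W_{\theta,n}]=0$. The tower property then yields $\mathbb{E}[\epsilon(t_N)W_{\theta,N}]=0$, and substituting this into the left-hand objective of \eqref{eq:NDSM_with_CV_optim} completes the proof.

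There is no serious obstacle here; the statement is a clean instance of the control-variate identity. The only point requiring care is the bookkeeping of the conditioning structure: one must check that $\epsilon(t_N)$ may be pulled out of the conditional expectation given $N$, which uses the stated independence of $N$ from the trajectory variables, and that the factor $1/\sigma_{n-1}$ is $Y_{n-1}$-measurable so that integrating out $Z_n$ genuinely produces a zero. Both facts are immediate from the setup of Theorem \ref{thm:NDSM}, so the argument is essentially a one-line consequence of the mean-zero property of $W_{\theta,n}$.
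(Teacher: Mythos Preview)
Your proposal is correct and follows essentially the same approach as the paper: both arguments condition on the random timestep $N$, use the independence of $N$ from the trajectory variables to pull out $\epsilon(t_n)$, and then exploit the independence of $Z_n$ from $Y_{n-1}$ together with $\mathbb{E}[Z_n]=0$ to conclude $\mathbb{E}[\epsilon(t_N)W_{\theta,N}]=0$. The only cosmetic difference is the order of conditioning (you condition on $Y_{n-1}$ first, the paper factors the expectation of the product directly), but the underlying logic is identical.
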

\begin{proof}
Recalling that $N$ is independent from the $Z_n$'s and the $Y_n$'s we can compute
\begin{align}
  \mathbb{E}[\epsilon(t_N) W_{\theta,N}] =& \mathbb{E}_{n\sim N}\left[\mathbb{E}[\epsilon(t_n) W_{\theta,n}]\right]\\
  =&\mathbb{E}_{n\sim N}\left[\epsilon(t_n)\mathbb{E}[s_\theta(\mu_{n-1},t_n)\cdot Z_n/\sigma_{n-1}]\right]\notag\\
  =&\mathbb{E}_{n\sim N}\left[\epsilon(t_n)\mathbb{E}[s_\theta(\mu_{n-1},t_n)/\sigma_{n-1}]\cdot \mathbb{E}[Z_n]\right]=0\,.\notag
\end{align}
The last line follows from the independence of $Z_n$ and $Y_{n-1}$ and $\mathbb{E}[Z_n]=0$. This implies \eqref{eq:NDSM_with_CV_optim}.
\end{proof}
Fixing $\epsilon=0$ reduces NDSM-CV to NDSM,  while fixing $\epsilon=1$ completely reverses the cancellation of the singular term that was central to the derivation in Theorem \ref{thm:NDSM} and hence results in significant numerical issues; we demonstrate this in the examples in Figures \ref{fig:2d_square_true_samples}-\ref{fig:2d_squares_eps_trained} below.  We emphasize that the freedom to let $\epsilon$ depend on $t_N$ in \eqref{eq:NDSM_with_CV_optim} is due to $N$ being independent from $Z$ and $Y$.
In practice, we are not directly interested in estimating the loss but rather in its gradient with respect to $\theta$. Therefore we let $\epsilon$  be a neural network (NN) with parameters $\phi$ and train  it to reduce the MSE of the gradient:
\begin{align}\label{eq:epsilon_optim_NN}
\mathrm{argmin}_\phi \mathbb{E}\left[\|\nabla_\theta(\mathcal{L}^{\mathrm{NDSM}}_{\theta,N}+\epsilon_\phi(t_N) W_{\theta,N})-\mathbb{E}\left[\nabla_\theta \mathcal{L}^{\mathrm{NDSM}}_{\theta,N}\right]\|^2\right] \,.  
\end{align}
One can reduce this to a more standard control variate method by letting $\epsilon_\phi$ be a constant, in which case the minization \eqref{eq:epsilon_optim_NN} can easily be solved exactly.   We also note that if one wants to study the $\Delta t_n\to 0$ limit of \eqref{eq:NDSM_with_CV_optim} in the EM case and in the presence of finite $\epsilon$ then one should redefine $\epsilon$ to explicitly extract the $\Delta t_n^{1/2}$ factor necessary to cancel the scaling of $\sigma_{n-1}$ with $\Delta t_n$.  However, in practice, we implement this method with fixed finite $\Delta t_n$ and so the appropriate scaling of $\epsilon$ will be learned through the training process; in our tests it made little difference whether or not we explicitly enforced this $\Delta t_n$ scaling.

\begin{algorithm}
\caption{NDSM-CV Method}\label{alg:NDSM_CV_training}
\begin{algorithmic}[1]
\For{$\ell=1,...,N_{\mathrm{iterations}}$}
    \State Sample $\{y_i\}_{i=1}^B$ from $\pi$ and for each $i$, sample $k$ timesteps $n_{i,j}$, $j=1,...,k$.
    \State Simulate the forward noising dynamics (Algorithm \ref{alg:forward_noising}), starting from $y_i$ to obtain $y_{i,n}$
    \State Save the values of $y_{i,n_{i,j}}$ and the corresponding Gaussian noise samples $z_{n_{i,j}}$.
        \State $\theta \gets \theta-\gamma_1 \nabla_{\theta} \frac{1}{kB}\sum_{i=1}^B\sum_{j=1}^k \left(\mathcal{L}_{\theta,n_{i,j}}^{\mathrm{NDSM}}+\epsilon_\phi(t_{n_{i,j}})W^\theta_{n_{i,j}}\right)$ \,\,\,\, (see Eq.~\ref{eq:NDSM_loss_def} and Eq.~\ref{eq:W_def2})
        \If{$\ell$ is divisible by $N_{CV}$}
        \State $\phi \gets \phi-\gamma_2 \nabla_{\phi} \frac{1}{kB}\sum_{i,j}\|\nabla_\theta(\mathcal{L}_{\theta,n_{i,j}}^{\mathrm{NDSM}}+\epsilon_\phi(t_{n_{i,j}}) W_{\theta,{n_{i,j}}})-\frac{1}{kB}\sum_{i,j}\nabla_\theta \mathcal{L}_{\theta,n_{i,j}}^{\mathrm{NDSM}}\|^2$
    \EndIf
\EndFor
\end{algorithmic}
\end{algorithm}
In Algorithm \ref{alg:NDSM_CV_training} we present the pseudocode for training \eqref{eq:NDSM_with_CV_optim} and \eqref{eq:epsilon_optim_NN} via SGD with learning rates $\gamma_1$ and $\gamma_2$. In practice, we use forward noising dynamics with GM drift (GM+NDSM-CV), as detailed in Section \ref{sec:GM_noising}, where the parameters of GM  are obtained from the data via an inexpensive preprocessing step. We note that updating $\epsilon_\phi$ according to \eqref{eq:epsilon_optim_NN} is expensive, due to the required per-sample gradient computations.  Therefore we only update after every $N_{CV}$ SGD updates of $\theta$; in practice we use $N_{CV}=20$, which we find gives good performance while having only a minor impact on computational cost.  At the cost of moderately higher variance, one can also simply fix $\epsilon=0$ (i.e., only canceling the singular term) in which case the $\phi$ update step is omitted in Algorithm \ref{alg:NDSM_CV_training}. In practice we find the optimal $\epsilon_\phi(t)$ to  be small  but  not identically zero.  

Finally, we  note $\Delta t_N$ can be chosen separately from $\Delta t_n$ for $n<N$; in practice, we observe that $\Delta t_N$ is the most important timestep, while $\Delta t_n$ for $n<N$ can often be chosen larger than $\Delta t_N$, thus reducing computational cost, without substantially impacting performance.  Strictly speaking, such a choice of timesteps requires a slight generalization of Theorems \ref{thm:NDSM} and \ref{thm:NDSM_CV} to allow the $t_n$  to depend on $N$, but it is straightforward to see that the theorems continue to hold in this case.  We emphasize that, in practice, the ability to work with a small $\Delta t_N$  relies heavily on the variance-reducing term we introduced in Theorem \ref{thm:NDSM} and built on in Theorem \ref{thm:NDSM_CV}.

\section{Low-dimensional examples}\label{sec:low_dim_examples}

\begin{figure}[h!]
\begin{subfigure}[t]{0.19\textwidth}
    \centering
    \includegraphics[width = \textwidth]{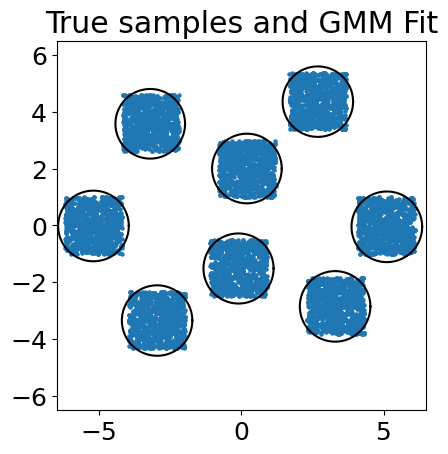} 
        \newsubcap{True samples and GMM modes.}\label{fig:2d_square_true_samples}
\end{subfigure}
\begin{subfigure}[t]{0.39\textwidth}
    \centering
    \includegraphics[width = .47\textwidth]{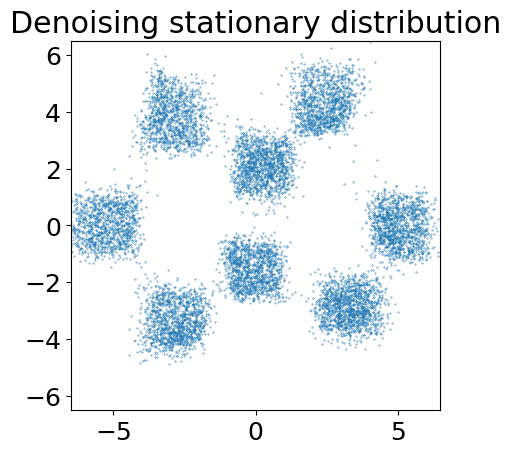} 
       \includegraphics[width = .47\textwidth]{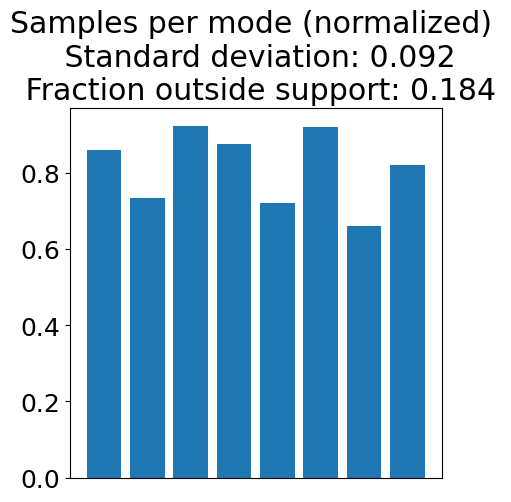} 
        \newsubcap{OU+DSM}\label{fig:2d_squares_OU_DSM}
\end{subfigure}
\begin{subfigure}[t]{0.4\textwidth}
    \centering
    \includegraphics[width = .47\textwidth]{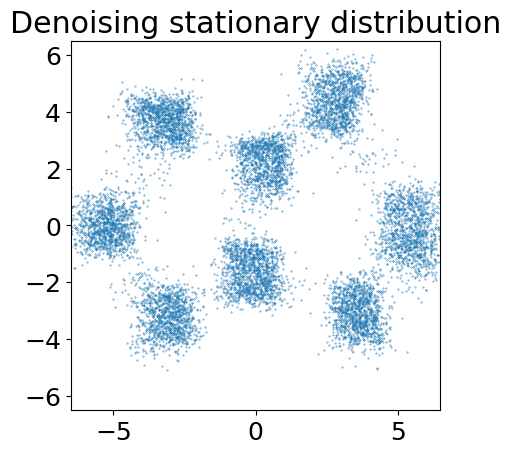} 
       \includegraphics[width = .47\textwidth]{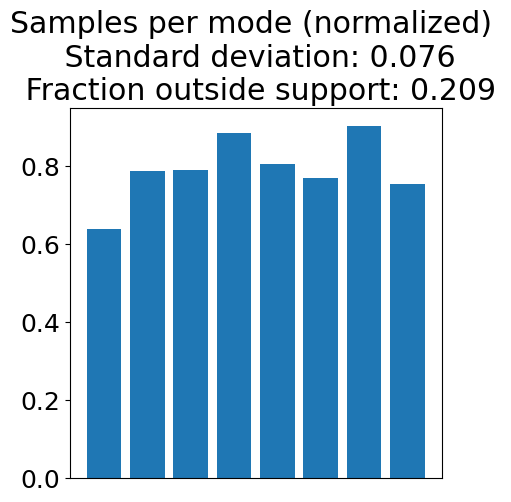}
        \newsubcap{GM+NDSM without CV (i.e., $\epsilon=1$)}\label{fig:2d_squares_eps1}
\end{subfigure}

\begin{subfigure}[t]{0.48\textwidth}
    \centering
    \includegraphics[width = .47\textwidth]{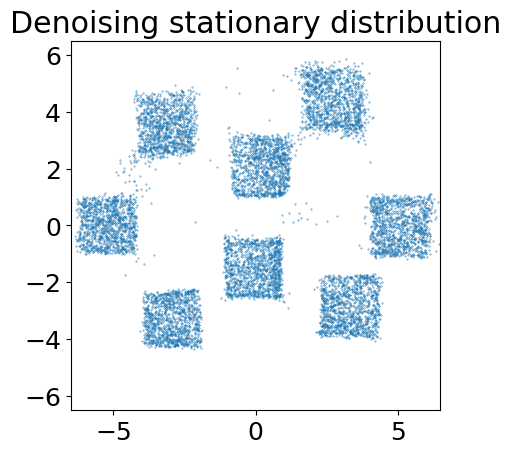} 
       \includegraphics[width = .47\textwidth]{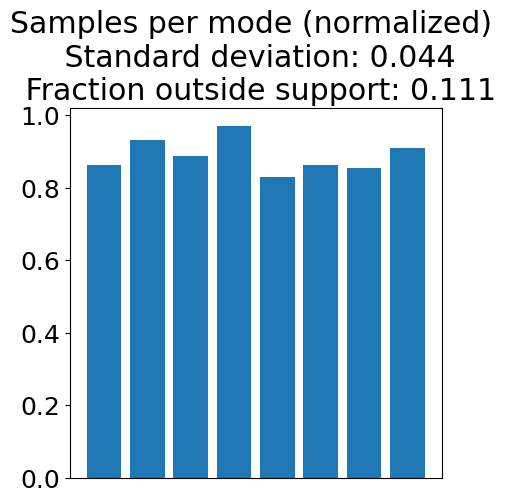} 
        \newsubcap{GM+NDSM-CV: $\epsilon=0$}\label{fig:2d_squares_eps0}
\end{subfigure}
\begin{subfigure}[t]{0.48\textwidth}
    \centering
    \includegraphics[width = .47\textwidth]{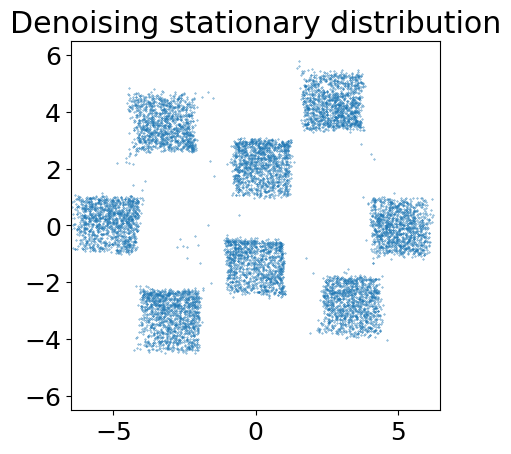} 
       \includegraphics[width = .47\textwidth]{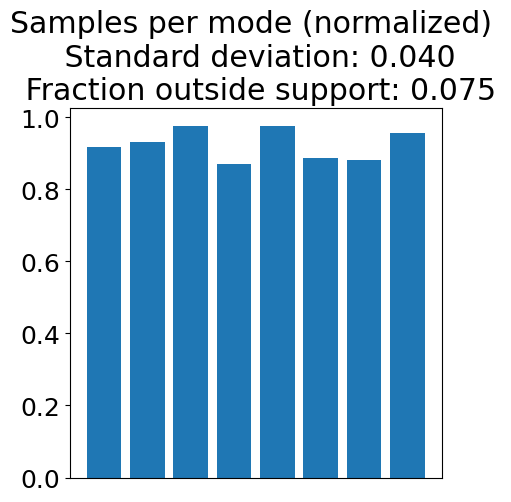} 
        \newsubcap{GM+NDSM-CV:   $\epsilon_\phi(t)$ trained via \eqref{eq:epsilon_optim_NN}}\label{fig:2d_squares_eps_trained}
\end{subfigure}
\caption{Comparison of sample quality and class balance on a 2D toy model example. Results for each method correspond to the median  value of the standard-deviation of  samples-per-mode    over 5 independent  runs.}\label{fig:8_squares}
\end{figure}

In this section we present  low-dimensional multi-modal distribution examples which illustrate  key aspects of the NDSM-CV method with Gaussian mixture forward dynamics (GM+NDSM-CV); these examples are motivated by clustering in latent space after auto-encoding, as in, e.g., \cite{ding2019deciphering,li2023searching}; see also the example in Section \ref{sec:MNIST_latent} below, where we employ it on MNIST. Here the GMM is learned from 10000 samples, as described in Section \ref{sec:GM_noising}; this makes a negligible contribution to the overall computational cost. We will compare these results with those of linear (OU) forward dynamics and denoising score matching (OU+DSM).  These examples use a score model with 7 fully connected hidden layers, each with 32 nodes, and GELU activations.  The control-variate method uses $\epsilon_\phi$ that has 3 fully connected hidden layers, each with 10 nodes, and ReLU activations. Both are trained using the Adam optimizer with a learning rate of $10^{-3}$ on 10000 training samples. The score models were trained  for 50000 SGD steps and $\epsilon_\phi$ was updated every 20 iterations (where applicable). The losses all use a minibatch size of 250.  In the case of DSM, each minibatch consists of 250 samples from the data distribution, evolved under the linear dynamics up to a random time.  In the case of NDSM-CV, each minibatch consists of 50 samples evolved under the  nonlinear dynamics, with each trajectory samples at 5 random times along the trajectory.   The denoising dynamics used 1000 timesteps in all cases.

In Figures \ref{fig:2d_square_true_samples}-\ref{fig:2d_squares_eps_trained}
 we compare standard DSM, Figure \ref{fig:2d_squares_OU_DSM}, with the NDSM-CV methods, both with fixed $\epsilon$ and with $\epsilon_\phi(t)$ trained via \eqref{eq:epsilon_optim_NN}. The NDSM-CV methods   use GM noising dynamics, simulated over 50 timesteps, with  $\Delta t_N=0.001$.    NDSM-CV with $\epsilon=0$ corresponds to a complete cancellation of the term singular that becomes singular as $\Delta t_N\to 0$, while $\epsilon=1$ corresponds to undoing this cancellation; see the discussion before Theorem \ref{thm:NDSM} as well as the proof. This fact explains the discrepancy between Figures \ref{fig:2d_squares_eps1} and \ref{fig:2d_squares_eps0}, with the former ($\epsilon=1$) showing degraded performance while the performance of the latter ($\epsilon=0$) is greatly improved due to the variance reduction; it particular, it significantly outperforms  DSM. 
 The best performance in this example is obtained when $\epsilon_\phi(t)$ is trained via \eqref{eq:epsilon_optim_NN}, see Figure \ref{fig:2d_squares_eps_trained}, both in terms of matching the support of the target distribution as well as having better balance between the modes, though the improvement over the simpler $\epsilon=0$ case is relatively minor compared to the effect of  non-adaptive variance reduction (i.e., compared to moving from $\epsilon=1$ to $\epsilon=0$).

We emphasize that the GM+NDSM methods all have improved class balance over OU+DSM in Figure \ref{fig:8_squares} due to the use of the GMM prior, which incorporates the   mode weights learned during the preprocessing step, as described in Section \ref{sec:GM_noising}. The difference in class balance  between Figures 3(c), 3(d), and 3(e) is not due to a difference in mixture weights; they all use the same GMM fitting procedure, which does provide accurate mode weights in this case.  The difference in performance between 3(c) and  3(d)~-~3(e) is that the training objective for 3(c) has much higher variance (exploding as $\Delta t_N\to 0$), resulting in a poorly trained  score model. In contrast,  3(d)-3(e) use the key variance-reducing corrections from our Theorems \ref{thm:NDSM} and \ref{thm:NDSM_CV} respectively.

\begin{figure}[t]
\begin{subfigure}[t]{0.19\textwidth}
    \centering
    \includegraphics[width = \textwidth]{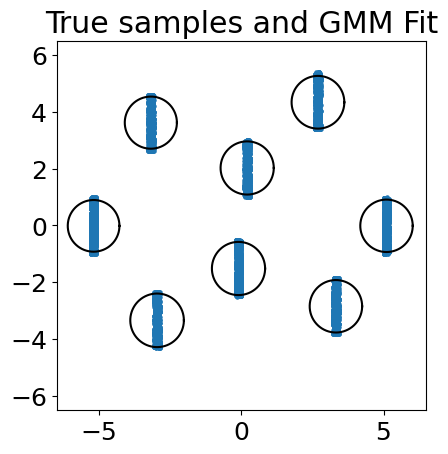} 
        \newsubcap{True samples}\label{fig:2d_thin_true_samples}
\end{subfigure}
    \begin{subfigure}[t]{0.39\textwidth}
    \centering
    \includegraphics[width = .47\textwidth]{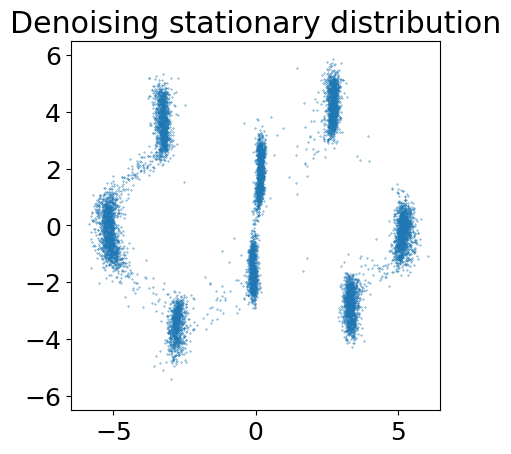} 
       \includegraphics[width = .47\textwidth]{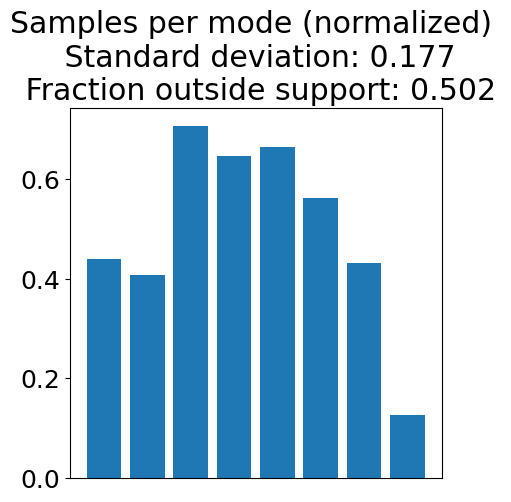} 
        \newsubcap{OU+DSM}\label{fig:2d_thin_DSM}
\end{subfigure} 
\begin{subfigure}[t]{0.4\textwidth}
    \centering
    \includegraphics[width = .47\textwidth]{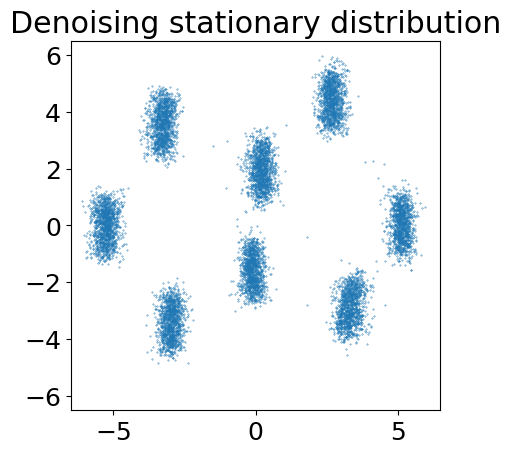} 
       \includegraphics[width = .47\textwidth]{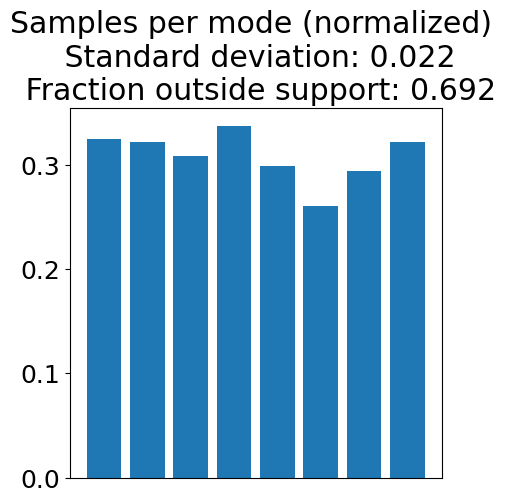} 
        \newsubcap{GM+NDSM-CV: $\Delta t_N=0.05$}
\end{subfigure}

\begin{subfigure}[t]{0.48\textwidth}
    \centering
    \includegraphics[width = .47\textwidth]{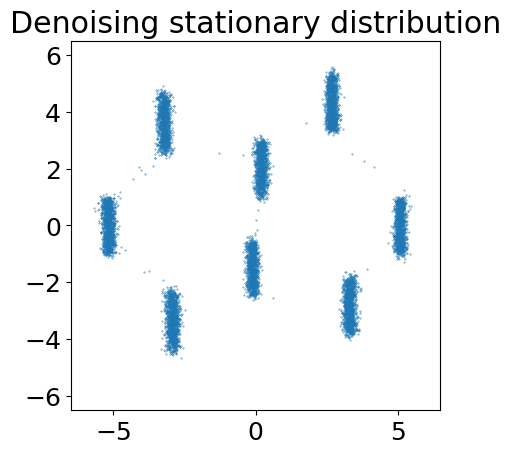} 
       \includegraphics[width = .47\textwidth]{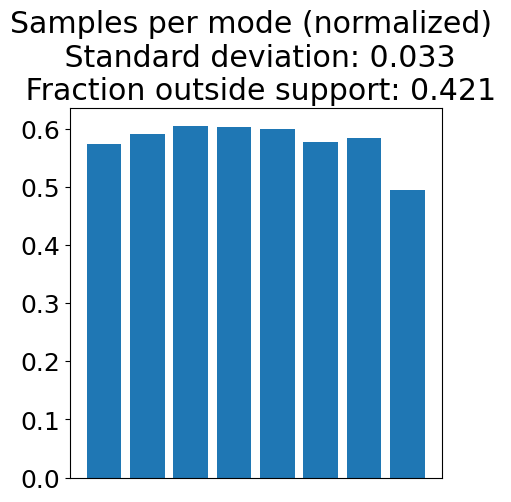} 
        \newsubcap{GM+NDSM-CV: $\Delta t_N=0.01$}
\end{subfigure}
\begin{subfigure}[t]{0.48\textwidth}
    \centering
    \includegraphics[width = .47\textwidth]{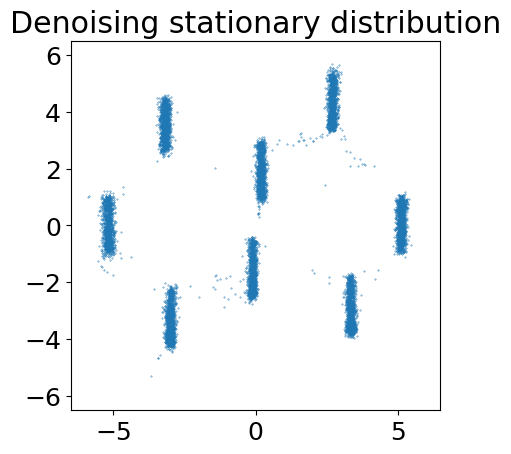} 
       \includegraphics[width = .47\textwidth]{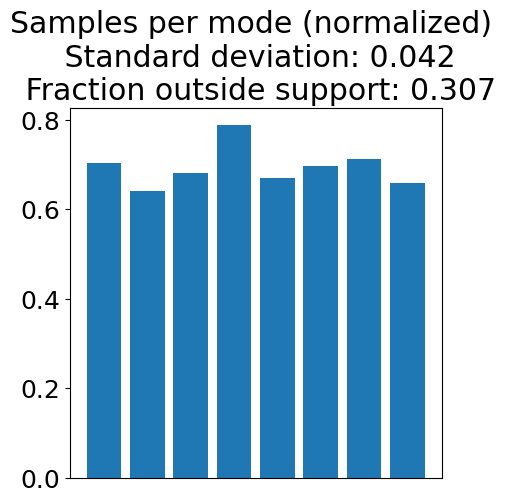} 
        \newsubcap{GM+NDSM-CV: $\Delta t_N=0.005$}\label{fig:2d_thin_NDSM_005}
\end{subfigure}
\caption{Comparison of sample quality and class balance on a 2D toy model example. Results for each method correspond to the median  value of the standard-deviation of  samples-per-mode    over 5 independent  runs.}
\end{figure}

In Figure \ref{fig:2d_thin_true_samples}-\ref{fig:2d_thin_NDSM_005} we show the effect of the  perturbation stepsize $\Delta t_N$ for timesteps  that enter into the NDSM loss \eqref{eq:NDSM_loss_def}, while fixing $\Delta t_n$ for $n<N$ to be $0.00998$.     We compare with DSM, Figure \ref{fig:2d_thin_DSM}, which does not have an analogous $\Delta t_N$ parameter.  We observe that a smaller $\Delta t_N$ is more capable of learning  distributions with (approximate) lower dimensional support. The GM dynamics also lead to samples that are more evenly distributed among the modes, similar to what was seen in Figures \ref{fig:2d_squares_OU_DSM}-\ref{fig:2d_squares_eps_trained}.

\section{High-dimensional image examples}\label{sec:image_examples}

Finally, we demonstrate the enhanced performance of our proposed GM+NDSM-CV model in learning high-dimensional \textit{structured} distributions, compared to the benchmark OU+DSM. Specifically, we evaluate their performance using the following two datasets:
    
\textbf{(a) MNIST:} A collection of 60,000 handwritten digits stored as $28\times 28$ grayscale images \citep{lecun1998gradient}. This dataset inherently represents a multi-modal distribution, with each digit class forming at least one mode.

\textbf{(b) MNIST in Latent Space:} A 4-dimensional latent space representation of the standard MNIST dataset.

\textbf{(c) Approx.-$C_2$-MNIST:} This dataset is constructed by randomly rotating MNIST digits by $180^{\circ}$ with a probability of 1/2 and resizing to half-size, creating a distribution that is approximately---\textit{but not exactly}---invariant under the discrete rotation group $C_2$. For a visual illustration of the image samples, refer to Figure~\ref{fig:mnist_nearly_c2_real}. It is important to note that the smaller digits are always upside-down, whereas the larger digits (those that are not transformed) remain upright.
\begin{figure}[h]
\centering
\includegraphics[width =0.6\textwidth]{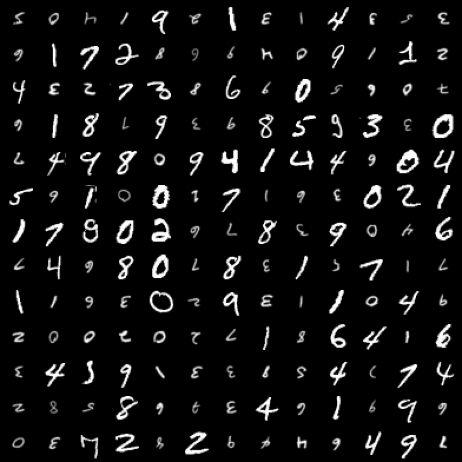}
\caption{Random samples from the Approx.-$C_2$-MNIST dataset. Note that smaller digits are always upside-down, whereas the larger digits (those that are not transformed) remain upright.}\label{fig:mnist_nearly_c2_real}
\end{figure}

\subsection{Implementation Details}\label{sec:MNIST_implementation}
The MNIST examples use the U-net \citep{ronneberger2015u} architecture as the backbone of the score network $s_\theta(y, t)$. More specifically, the encoder part of the score model comprises four blocks with decreasing spatial resolution, each containing a $3\times 3$  convolution layer, group normalization layer, and a ``swish'' activation function \citep{ramachandran2017searching}. Time information is incorporated via Gaussian random features \citep{tancik2020fourier} and propagated through fully connected layers in each encoder block. The decoder, defined similarly with increasing spatial resolution, includes skip connections from the encoding to the decoding path.

The MNIST in latent space example uses an autoencoder where the encoder consists of three $3\times 3$ convolutional layers, $1\to 8\to 16\to 32$, all with stride of $2$ and with  BatchNorm2d layers between the convolutional layers. The result is  flattened and followed by two fully connected layers $288\to 128\to 4$, with $4$ being the latent space dimension; all layers use  ReLU activations. The decoder follows the same structure in reverse order, with an additional sigmoid layer at the end. The autoencoder was trained for $50$ epochs using the Adam optimizer with learning rate of $10^{-3}$, weight decay of $10^{-5}$, and with a batch size of 256. This example uses the same score model (with the exception of the initial data dimension being $4$ rather than $2$), control-variate net, minibatch size, and optimizer  as in the $2$-dimensional examples from Section \ref{sec:low_dim_examples}. The model was trained for $100$ epochs.

For the benchmark OU+DSM, we consider mainly the Variance Preserving (VP) SDE \citep{sohl2015deep,ho2020denoising,song2020score} as the forward diffusion process:
\begin{align}
    dY(t) = -\frac{1}{2}\beta(t)Y(t)dt + \sqrt{\beta(t)}dW(t),
\end{align}
where $\beta(t)$ is a linear function on $[0, T]$, with $\beta(0) = 0.1$ and $\beta(T) = 20$. The terminal time is set to $T=1$.  For NDSM-CV, we use Langevin dynamics with the preprocessed Gaussian Mixture as the stationary distribution for the forward diffusion process, with the terminal time set to $T=2$. 

Unless stated otherwise above, the models were trained using the Adam optimizer with a batch size of 64 for 100 epochs. The computations for examples (a) and (c) were on a Linux machine equipped with a GeForce RTX 3090 GPU and example (b) was done using the    LEAP2 HPC resources at Texas State University.

For the Approx.-$C_2$-MNIST dataset, we also consider group-equivariant score models, achieved by symmetrizing a standard score model under the $C_2$ group \citep{lu2024structure, hoogeboom2022equivariant}; such models have shown superior performance over their non-equivariant counterparts in learning distributions that are \textit{exactly} group invariant.

\subsection{MNIST}\label{sec:MNIST_example}
\begin{table} 
\parbox{.4\linewidth}{\centering
\caption{IS and FID on MNIST}
  \label{tab:result_mnist}
  \centering
  \begin{tabular}{lcc}
    \toprule
    Model     & IS$\uparrow$     & FID$\downarrow$\\
    \midrule
    OU+DSM & 6.76  & 143.3     \\
    \midrule
    Our model; $\epsilon=0$ & 8.82 & 37.4\\
    Our model; $\epsilon_{\phi}(t)$ trained & \textbf{8.93} & \textbf{36.1}\\
    \bottomrule 
  \end{tabular}
}
\hfill 
\parbox{.4\linewidth}{ 
  \centering
    \caption{Low data MNIST ($N = 14000$)}
  \begin{tabular}{lcc}
    \toprule
    Model     & IS$\uparrow$     & FID$\downarrow$\\
    \midrule
    OU+DSM & 5.17 & 470.92    \\
    \midrule
    Our model; $\epsilon_{\phi}(t)$ trained & \textbf{6.89} & \textbf{190.63}\\
    \bottomrule 
  \end{tabular}
}
\end{table}

A mixture of ten Gaussians fitted to MNIST digits is used as the stationary (prior) distribution for GM-NDSM-CV. For computational efficiency, the covariance matrix of each component is a constant but potentially distinct multiple of the identity matrix. We use a small timestep, $\Delta t = 10^{-3}$, to ensure accurate simulation of the forward nonlinear SDE; we use this same timestep for the denoising dynamics.

The top row of Figure~\ref{fig:mnist_all} displays random samples generated by different models, and the bottom row shows the class distribution of these samples. The benchmark OU+DSM (Figure~\ref{fig:mnist_ou_dsm}) exhibits significant mode imbalance, predominantly generating ``easy digits'' such as 1 (over 30\%) and 7. Our models without or with neural control variate, shown in Figure~\ref{fig:mnist_ndsm_fixed} and Figure~\ref{fig:mnist_ndsm_optimized} respectively, address this issue by producing more evenly distributed samples across all classes with consistent quality. 
\begin{figure}[h!]
\centering
\begin{subfigure}[t]{0.29\textwidth}
    \centering
    \includegraphics[width = \textwidth]{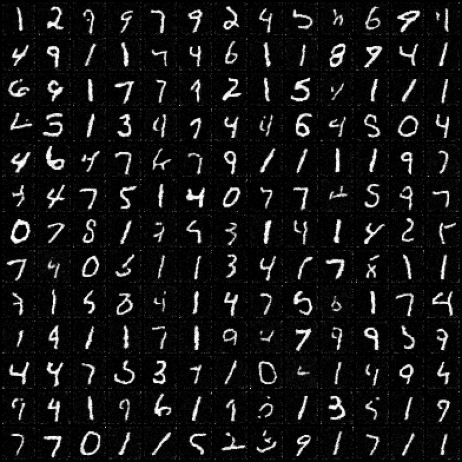} 
\end{subfigure}~~
\begin{subfigure}[t]{0.29\textwidth}
    \centering
    \includegraphics[width = \textwidth]{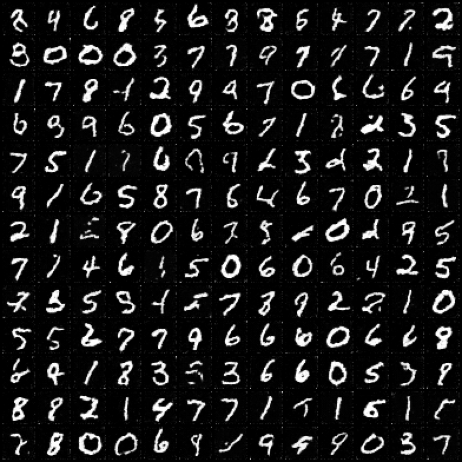}        
\end{subfigure}~~
\begin{subfigure}[t]{0.29\textwidth}
    \centering
    \includegraphics[width = \textwidth]{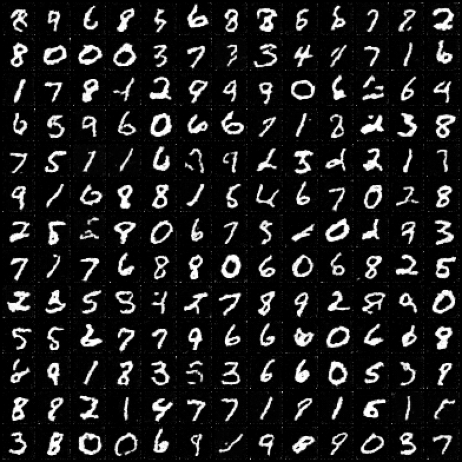}
\end{subfigure}\\
\begin{subfigure}[t]{0.29\textwidth}
    \centering
    \includegraphics[width = \textwidth]{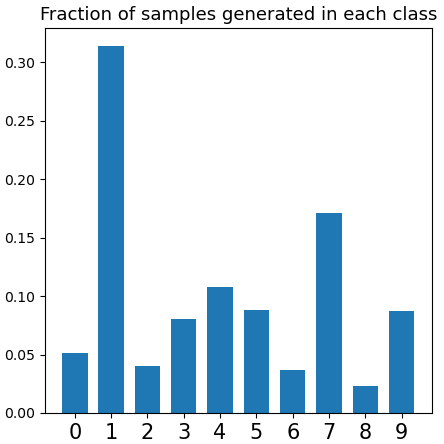} 
    \subcaption{OU+DSM}
    \label{fig:mnist_ou_dsm}
\end{subfigure}
\begin{subfigure}[t]{0.29\textwidth}
    \centering
    \includegraphics[width = \textwidth]{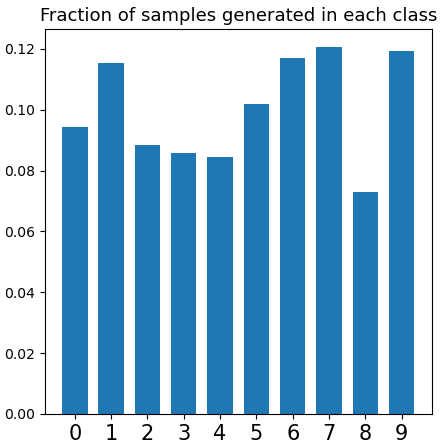}
    \subcaption{GM+NDSM-CV:  $\epsilon=0$}
    \label{fig:mnist_ndsm_fixed}
\end{subfigure}
\begin{subfigure}[t]{0.29\textwidth}
    \centering
    \includegraphics[width = \textwidth]{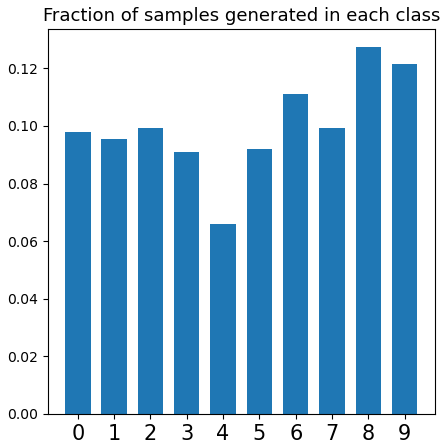}
    \subcaption{GM+NDSM-CV: $\epsilon_\phi(t)$ trained}
    \label{fig:mnist_ndsm_optimized}
\end{subfigure}
\caption{Top row: random samples generated by different models on the MNIST dataset. Bottom row: the fraction of samples generated in each digit class from 0 to 9.}\label{fig:mnist_all}
\end{figure}
\begin{figure}[h!]
\begin{subfigure}{0.48\textwidth}
    \centering
    \includegraphics[width = .48\textwidth,trim = {45 45 45 45},clip]{figures/low_data_pics/nlsm_6000.png}
    \includegraphics[width = .48\textwidth,trim = {45 45 45 45},clip]{figures/low_data_pics/ou_dsm_6000.png}
    \subcaption{$N=6000$: NDSM  (left),   DSM (right)}
    \end{subfigure}
        \begin{subfigure}{0.48\textwidth}
    \includegraphics[width = .48\textwidth,trim = {45 45 45 45},clip]{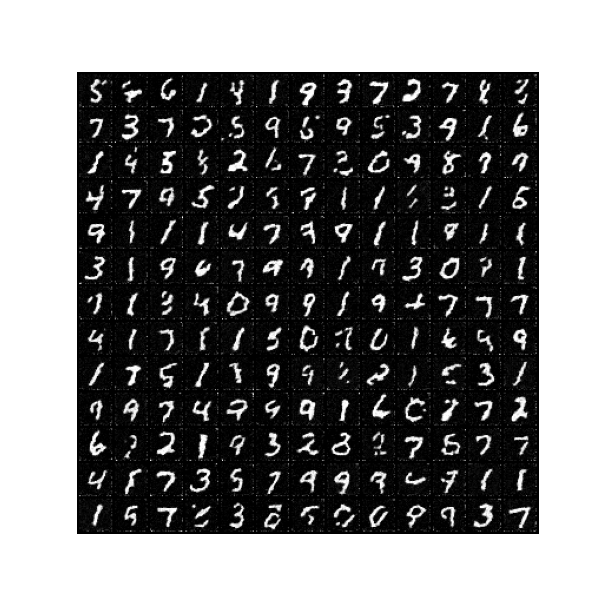}
     \includegraphics[width = .48\textwidth,trim = {45 45 45 45},clip]{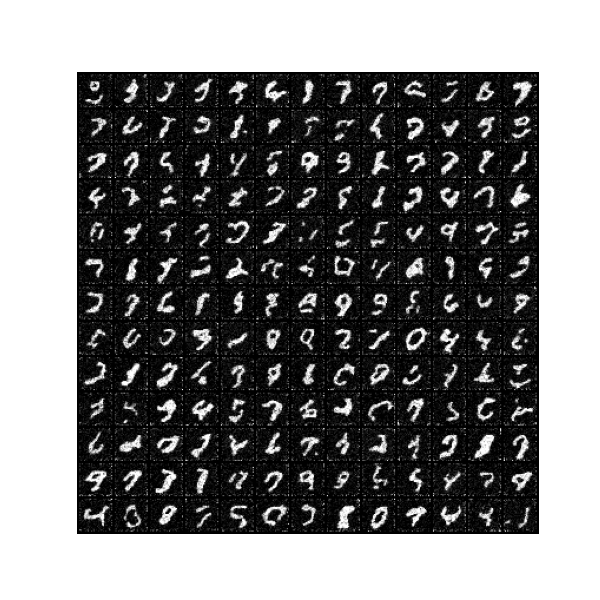}
     \subcaption{$N = 14000$: NDSM (left), DSM (right)}
\end{subfigure} 
\caption{MNIST in the low data regime ($N$ training samples), comparing OU+DSM with our new GM+NDSM-CV method. GM+NDSM-CV can learn well with less data. }\label{fig:MNIST_less_data}
\end{figure}

 Table~\ref{tab:result_mnist} presents the \textit{inception score} (IS, higher is better) \citep{salimans2016improved} and \textit{Fr\'{e}chet inception distance} (FID, lower is better) \citep{NIPS2017_8a1d6947}, evaluated using a pre-trained ResNet18 classifier on MNIST. These metrics affirm that our models significantly surpass the benchmark OU+DSM, with the neural control variate model achieving the best results, corroborating the visual evidence in Figure~\ref{fig:mnist_all}. In addition, in Figure \ref{fig:MNIST_less_data} we demonstrate that the structure encoded in the GM drift yields an SGM that learns with fewer training samples.
We note that for the MNIST examples we have used a smaller NN (see Section \ref{sec:MNIST_implementation}) than employed by some other studies so as to reduce the computational cost and focus on the effects of nonlinear dynamics and corresponding score matching method.

\subsection{Approx.-$C_2$-MNIST}
\begin{figure}[h]
\centering
\begin{subfigure}[t]{0.43\textwidth}
    \centering
    \includegraphics[width = .67\textwidth]{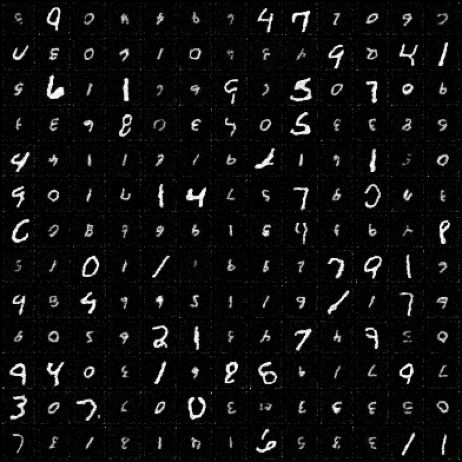} 
    \includegraphics[width = .30\textwidth]{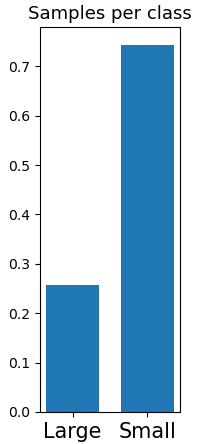}
        \subcaption{OU+DSM}
        \label{fig:mnist_nearly_c2_ou_dsm}
\end{subfigure}~~~~
\begin{subfigure}[t]{0.43\textwidth}
    \centering
    \includegraphics[width = .67\textwidth]{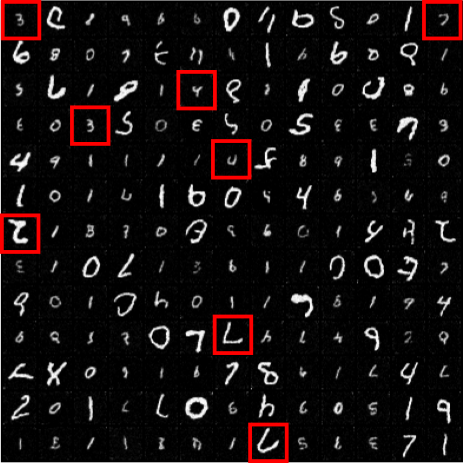} 
    \includegraphics[width = .30\textwidth]{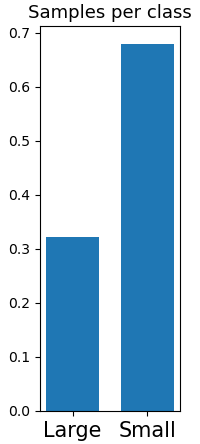}
        \subcaption{OU+DSM, equivariant score model}
        \label{fig:mnist_nearly_c2_ou_dsm_equiv}
\end{subfigure}\\
\begin{subfigure}[t]{0.43\textwidth}
    \centering
    \includegraphics[width = .67\textwidth]{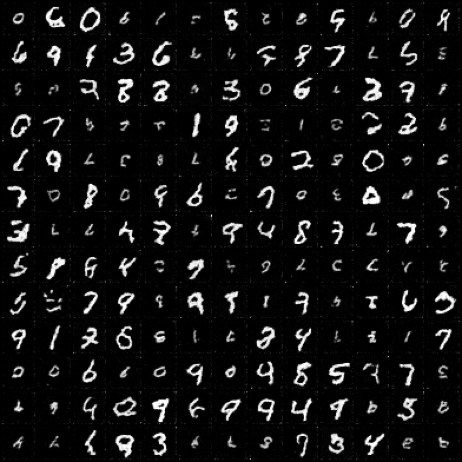} 
    \includegraphics[width = .30\textwidth]{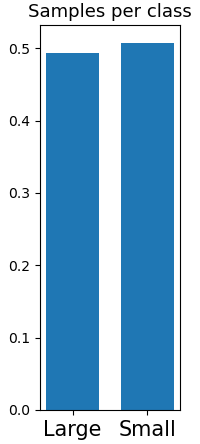}
        \subcaption{GM+NDSM-CV, $\epsilon=0$}
        \label{fig:mnist_nearly_c2_ndsm}
\end{subfigure}~~~~
\begin{subfigure}[t]{0.43\textwidth}
    \centering
    \includegraphics[width = .67\textwidth]{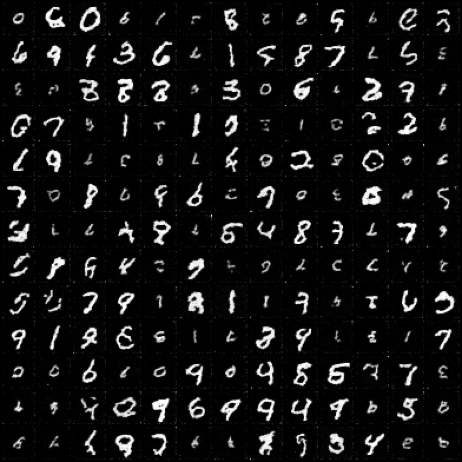} 
    \includegraphics[width = .30\textwidth]{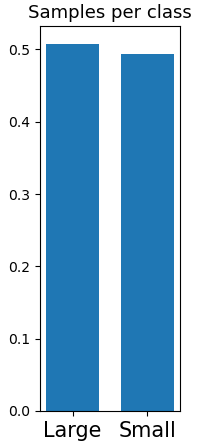}
        \subcaption{GM+NDSM-CV, $\epsilon_{\phi}(t)$ trained}
        \label{fig:mnist_nearly_c2_ndsm_optimized}
\end{subfigure}
\caption{Approx.-$C_2$-MNIST. Problematic ``fake samples'', such as large-but-upside-down and small-but-upright digits, are highlighted in panel (b) when using an equivariant score model.}\label{fig:mnist_nearly_c2all}
\vspace{-10pt}
\end{figure}
Given the approximate $C_2$-symmetry, we fit two Gaussians with shared covariance matrices to the dataset as a preprocessing step for GM-NDSM-CV. Figure~\ref{fig:mnist_nearly_c2all} showcases random samples from various models. Notably, the benchmark OU+DSM, both without (Figure~\ref{fig:mnist_nearly_c2_ou_dsm}) and with (Figure~\ref{fig:mnist_nearly_c2_ou_dsm_equiv})   an equivariant score model, consistently exhibits mode imbalance, predominantly generating the ``easy mode'' of small digits. Additionally, the OU+DSM with the equivariant score model introduces another issue by producing ``fake samples'' such as large-but-upside-down and small-but-upright digits (highlighted in red in Figure~\ref{fig:mnist_nearly_c2_ou_dsm_equiv}), indicating that the model erroneously learns from the $C_2$-symmetrized version of the underlying distribution. For comparison, see Figure~\ref{fig:mnist_nearly_c2_real} for true samples from this dataset. These issues are effectively addressed by our models (Figures~\ref{fig:mnist_nearly_c2_ndsm} and \ref{fig:mnist_nearly_c2_ndsm_optimized}), which leverage a flexible design of the stationary distribution and a tailored nonlinear diffusion for the data.

\subsection{MNIST in Latent Space}\label{sec:MNIST_latent}
After compressing the MNIST dataset using the autoencoder described in Section \ref{sec:MNIST_implementation}, we fit a Gaussian mixture model to 1000 samples from the latent space representation and compare the OU+DSM and GM+NDSM-CV methods in Table \ref{tab:result_mnist_latent}. Here we trained both methods until convergence and report the average performance over the last 5 epochs.  The denoising dynamics used 1000 timesteps in both cases.

We again find the performance of our method to be a substantial improvement over OU+DSM.  We emphasize that our method in this setting are computationally  less expensive than in Section \ref{sec:MNIST_example}; the significantly reduced dimensionality lowers the cost associated with simulating the nonlinear SDE, allows for larger timesteps to be used  and allows for the use of neural networks with many fewer parameters, all while providing similar performance in terms of FID  and significantly improved performance in terms of IS as compared to our method in Section \ref{sec:MNIST_example}. Our method here used $\epsilon=0$; the results with $\epsilon_\phi(t)$ trained are similar, though the increased computational cost is not justified.
\begin{table}[h!]
\caption{IS and FID on MNIST:\\
4-Dimensional Latent Space Representation}
  \label{tab:result_mnist_latent}
  \centering
  \begin{tabular}{lcccc}   
     & \multicolumn{2}{c}{Trained to Convergence}       &  \multicolumn{2}{c}{Equal Training Time} 
        \\
         \toprule
    Model   & IS$\uparrow$     & FID$\downarrow$   & IS$\uparrow$     & FID$\downarrow$\\
    \midrule
    OU+DSM & 9.14
  & 67.4 &8.99&78.3
     \\
    \midrule
    GM+NDSM-CV & {\bf 9.52} & {\bf 42.9} & {\bf 9.42}& {\bf 45.1}
\\    
    \bottomrule 
  \end{tabular}
\end{table}
The performance gain of GM+NDSM-CV over OU+DSM persists even when accounting for the computational cost of both methods. This is shown  in Figure \ref{fig:performance_vs_time} as well as in the two rightmost columns of Table \ref{tab:result_mnist_latent}, where we report the average performance  over the last 5 epochs shown in the figure for each method. Specifically, we plot the FID and IS versus training time, allowing for OU+DSM to run for more epochs in order to equalize the training time (on the same set of resources) of the two methods.  Here we see that the GM+NDSM-CV method converges much faster than  OU+DSM. Thus there is a wider performance gap when both methods are allocated equal training time  than when both methods are trained  to convergence.

\begin{figure}[b]
    \centering
    \includegraphics[width = .47\textwidth]{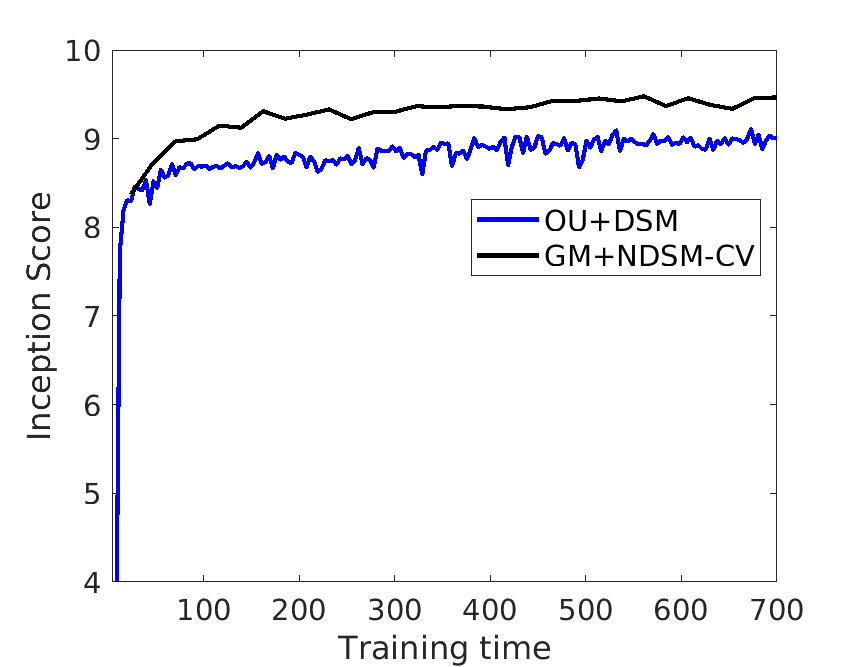} 
       \includegraphics[width = .47\textwidth]{figures/performance_vs_time/FID_time.png} 
        \caption{Performance versus computation time for OU+DSM and GM+NDSM-CV methods. }\label{fig:performance_vs_time}
\end{figure}

\section{Conclusions}



The NDSM-CV method is a general purpose method for training generative models that allows for the use of nonlinear noising dynamics, thereby incorporating appropriate information on the structure of the data into the dynamics and prior distribution.  We have demonstrated that generative models trained using the proposed NDSM-CV method can attain significant improvement in the quality of generative models and allow them to be trained with substantially smaller data sets.  The necessity of simulating the nonlinear SDE still adds  additional computational cost, thus we use several techniques, such as reusing sample paths but with different randomly sampled timesteps, to increase the efficiency. We note the relation to the contemporaneous local-DSM approach of \cite{singhals}, which also proposed the use of nonlinear noising dynamics for training SGMs. Our work is differentiated in several respects: 1) we introduced a new variance-reduced  NDSM-loss in Theorem \ref{thm:NDSM}, achieved through identifying and canceling a   high-variance mean-zero term, 2) we proposed a novel neural control variates method to further lower the variance   in Theorem \ref{thm:NDSM_CV}, 3) we utilized priors obtained by fitting (a subset of) the data via an inexpensive preprocessing step.  We find the latter to be especially effective when applied to latent space representations of high-dimensional data.

For future work, we foresee using customized nonlinear noising processes for improving generative models for inference applications.  Likelihood-free or simulation-based inference \cite{cranmer2020frontier} is a popular application of generative models \cite{baptista2024bayesian}, and score-based models have been adapted to perform conditional sampling \cite{batzolis2021conditional}. Bayesian posteriors with multiple modes are especially challenging to sample \cite{shaw2007efficient}, and SGMs with nonlinear diffusion processes provides a way to address their multimodal structure.

\section*{Acknowledgements}
M. Katsoulakis, B. Zhang, L. Rey-Bellet  are  partially funded by AFOSR grant FA9550-21-1-0354. M.K. and L. R.-B. are  partially funded by NSF DMS-2307115. M.K. is partially funded  by NSF TRIPODS CISE-1934846. W. Zhu is partially supported by NSF grants DMS-2052525, DMS-2140982, and DMS-2244976, as well as AFOSR grant FA9550-25-1-0079. This work was enabled in part by Texas State University scientific computational resources provided by the LEAP2 High Performance Computing service.


\bibliography{arxiv_version.bbl}

\begin{thebibliography}{48}
\providecommand{\natexlab}[1]{#1}
\providecommand{\url}[1]{\texttt{#1}}
\expandafter\ifx\csname urlstyle\endcsname\relax
  \providecommand{\doi}[1]{doi: #1}\else
  \providecommand{\doi}{doi: \begingroup \urlstyle{rm}\Url}\fi

\bibitem[Asmussen and Glynn(2007)]{asmussen2007stochastic}
S.~Asmussen and P.~W. Glynn.
\newblock \emph{Stochastic simulation: algorithms and analysis}, volume~57.
\newblock Springer, 2007.

\bibitem[Baptista et~al.(2024)Baptista, Cao, Chen, Ghattas, Li, Marzouk, and
  Oden]{baptista2024bayesian}
R.~Baptista, L.~Cao, J.~Chen, O.~Ghattas, F.~Li, Y.~M. Marzouk, and J.~T. Oden.
\newblock Bayesian model calibration for block copolymer self-assembly:
  Likelihood-free inference and expected information gain computation via
  measure transport.
\newblock \emph{Journal of Computational Physics}, 503:\penalty0 112844, 2024.

\bibitem[Batzolis et~al.(2021)Batzolis, Stanczuk, Sch{\"o}nlieb, and
  Etmann]{batzolis2021conditional}
G.~Batzolis, J.~Stanczuk, C.-B. Sch{\"o}nlieb, and C.~Etmann.
\newblock Conditional image generation with score-based diffusion models.
\newblock \emph{arXiv preprint arXiv:2111.13606}, 2021.

\bibitem[Berner et~al.(2022)Berner, Richter, and Ullrich]{berner2022optimal}
J.~Berner, L.~Richter, and K.~Ullrich.
\newblock An optimal control perspective on diffusion-based generative
  modeling.
\newblock In \emph{NeurIPS 2022 Workshop on Score-Based Methods}, 2022.

\bibitem[Birrell et~al.(2022)Birrell, Katsoulakis, Rey-Bellet, and
  Zhu]{birrell2022structure}
J.~Birrell, M.~Katsoulakis, L.~Rey-Bellet, and W.~Zhu.
\newblock Structure-preserving {GAN}s.
\newblock In \emph{International Conference on Machine Learning}, volume 162,
  pages 1982--2020. PMLR, 2022.

\bibitem[Chen et~al.(2022)Chen, Wang, Pope, Chen, and Fuge]{chen2022inverse}
Q.~Chen, J.~Wang, P.~Pope, W.~Chen, and M.~Fuge.
\newblock Inverse design of two-dimensional airfoils using conditional
  generative models and surrogate log-likelihoods.
\newblock \emph{Journal of Mechanical Design}, 144\penalty0 (2):\penalty0
  021712, 2022.

\bibitem[Chen and Ahmed(2021)]{chen2021mo}
W.~Chen and F.~Ahmed.
\newblock {MO-PaDGAN}: Reparameterizing engineering designs for augmented
  multi-objective optimization.
\newblock \emph{Applied Soft Computing}, 113:\penalty0 107909, 2021.

\bibitem[Chen et~al.(2023{\natexlab{a}})Chen, Katsoulakis, Rey-Bellet, and
  Zhu]{chen2023sample}
Z.~Chen, M.~Katsoulakis, L.~Rey-Bellet, and W.~Zhu.
\newblock Sample complexity of probability divergences under group symmetry.
\newblock In \emph{International Conference on Machine Learning}, pages
  4713--4734. PMLR, 2023{\natexlab{a}}.

\bibitem[Chen et~al.(2023{\natexlab{b}})Chen, Katsoulakis, Rey-Bellet, and
  Zhu]{chen2023statistical}
Z.~Chen, M.~A. Katsoulakis, L.~Rey-Bellet, and W.~Zhu.
\newblock Statistical guarantees of group-invariant {GAN}s.
\newblock \emph{arXiv preprint arXiv:2305.13517}, 2023{\natexlab{b}}.

\bibitem[Collet et~al.(2013)Collet, Mart{\'\i}nez, San~Mart{\'\i}n,
  et~al.]{collet2013quasi}
P.~Collet, S.~Mart{\'\i}nez, J.~San~Mart{\'\i}n, et~al.
\newblock \emph{Quasi-stationary distributions: Markov chains, diffusions and
  dynamical systems}, volume~1.
\newblock Springer, 2013.

\bibitem[Cranmer et~al.(2020)Cranmer, Brehmer, and Louppe]{cranmer2020frontier}
K.~Cranmer, J.~Brehmer, and G.~Louppe.
\newblock The frontier of simulation-based inference.
\newblock \emph{Proceedings of the National Academy of Sciences}, 117\penalty0
  (48):\penalty0 30055--30062, 2020.

\bibitem[Dax et~al.(2021)Dax, Green, Gair, Macke, Buonanno, and
  Sch{\"o}lkopf]{dax2021real}
M.~Dax, S.~R. Green, J.~Gair, J.~H. Macke, A.~Buonanno, and B.~Sch{\"o}lkopf.
\newblock Real-time gravitational wave science with neural posterior
  estimation.
\newblock \emph{Physical review letters}, 127\penalty0 (24):\penalty0 241103,
  2021.

\bibitem[Ding et~al.(2019)Ding, Zou, and Brooks~III]{ding2019deciphering}
X.~Ding, Z.~Zou, and C.~L. Brooks~III.
\newblock Deciphering protein evolution and fitness landscapes with latent
  space models.
\newblock \emph{Nature communications}, 10\penalty0 (1):\penalty0 5644, 2019.

\bibitem[Dockhorn et~al.(2022)Dockhorn, Vahdat, and Kreis]{dockhorn2021score}
T.~Dockhorn, A.~Vahdat, and K.~Kreis.
\newblock Score-based generative modeling with critically-damped {L}angevin
  diffusion.
\newblock In \emph{International Conference on Learning Representations}, 2022.

\bibitem[Garcia~Satorras et~al.(2021)Garcia~Satorras, Hoogeboom, Fuchs, Posner,
  and Welling]{garcia2021n}
V.~Garcia~Satorras, E.~Hoogeboom, F.~Fuchs, I.~Posner, and M.~Welling.
\newblock {E(n)} equivariant normalizing flows.
\newblock \emph{Advances in Neural Information Processing Systems},
  34:\penalty0 4181--4192, 2021.

\bibitem[Goodfellow et~al.(2014)Goodfellow, Pouget-Abadie, Mirza, Xu,
  Warde-Farley, Ozair, Courville, and Bengio]{goodfellow2014generative}
I.~Goodfellow, J.~Pouget-Abadie, M.~Mirza, B.~Xu, D.~Warde-Farley, S.~Ozair,
  A.~Courville, and Y.~Bengio.
\newblock Generative adversarial nets.
\newblock \emph{Advances in neural information processing systems}, 27, 2014.

\bibitem[Grathwohl et~al.(2018)Grathwohl, Chen, Bettencourt, Sutskever, and
  Duvenaud]{grathwohl2018ffjord}
W.~Grathwohl, R.~T. Chen, J.~Bettencourt, I.~Sutskever, and D.~Duvenaud.
\newblock {FFJORD}: Free-form continuous dynamics for scalable reversible
  generative models.
\newblock In \emph{International Conference on Learning Representations}, 2018.

\bibitem[Gu et~al.(2024{\natexlab{a}})Gu, Birmpa, Pantazis, Rey-Bellet, and
  Katsoulakis]{gu2024lipschitzregularizedgradientflowsgenerative}
H.~Gu, P.~Birmpa, Y.~Pantazis, L.~Rey-Bellet, and M.~A. Katsoulakis.
\newblock Lipschitz-regularized gradient flows and generative particle
  algorithms for high-dimensional scarce data.
\newblock \emph{SIAM Journal on Mathematics of Data Science}, to appear,
  2024{\natexlab{a}}.

\bibitem[Gu et~al.(2024{\natexlab{b}})Gu, Katsoulakis, Rey-Bellet, and
  Zhang]{gu2024combining}
H.~Gu, M.~A. Katsoulakis, L.~Rey-Bellet, and B.~J. Zhang.
\newblock Combining {W}asserstein-1 and {W}asserstein-2 proximals: robust
  manifold learning via well-posed generative flows.
\newblock \emph{arXiv preprint arXiv:2407.11901}, 2024{\natexlab{b}}.

\bibitem[Heusel et~al.(2017)Heusel, Ramsauer, Unterthiner, Nessler, and
  Hochreiter]{NIPS2017_8a1d6947}
M.~Heusel, H.~Ramsauer, T.~Unterthiner, B.~Nessler, and S.~Hochreiter.
\newblock {GAN}s trained by a two time-scale update rule converge to a local
  nash equilibrium.
\newblock In I.~Guyon, U.~V. Luxburg, S.~Bengio, H.~Wallach, R.~Fergus,
  S.~Vishwanathan, and R.~Garnett, editors, \emph{Advances in Neural
  Information Processing Systems}, volume~30. Curran Associates, Inc., 2017.
\newblock URL
  \url{https://proceedings.neurips.cc/paper_files/paper/2017/file/8a1d694707eb0fefe65871369074926d-Paper.pdf}.

\bibitem[Ho et~al.(2020)Ho, Jain, and Abbeel]{ho2020denoising}
J.~Ho, A.~Jain, and P.~Abbeel.
\newblock Denoising diffusion probabilistic models.
\newblock \emph{Advances in neural information processing systems},
  33:\penalty0 6840--6851, 2020.

\bibitem[Hoogeboom et~al.(2022)Hoogeboom, Satorras, Vignac, and
  Welling]{hoogeboom2022equivariant}
E.~Hoogeboom, V.~G. Satorras, C.~Vignac, and M.~Welling.
\newblock Equivariant diffusion for molecule generation in 3d.
\newblock In \emph{International conference on machine learning}, pages
  8867--8887. PMLR, 2022.

\bibitem[Kastner and Dogan(2023)]{kastner2023gan}
P.~Kastner and T.~Dogan.
\newblock A gan-based surrogate model for instantaneous urban wind flow
  prediction.
\newblock \emph{Building and Environment}, 242:\penalty0 110384, 2023.

\bibitem[K{\"o}hler et~al.(2020)K{\"o}hler, Klein, and
  No{\'e}]{kohler2020equivariant}
J.~K{\"o}hler, L.~Klein, and F.~No{\'e}.
\newblock Equivariant flows: exact likelihood generative learning for symmetric
  densities.
\newblock In \emph{International conference on machine learning}, pages
  5361--5370. PMLR, 2020.

\bibitem[LeCun et~al.(1998)LeCun, Bottou, Bengio, and
  Haffner]{lecun1998gradient}
Y.~LeCun, L.~Bottou, Y.~Bengio, and P.~Haffner.
\newblock Gradient-based learning applied to document recognition.
\newblock \emph{Proceedings of the IEEE}, 86\penalty0 (11):\penalty0
  2278--2324, 1998.

\bibitem[Leli{\`e}vre et~al.(2022)Leli{\`e}vre, Ramil, and
  Reygner]{lelievre2022quasi}
T.~Leli{\`e}vre, M.~Ramil, and J.~Reygner.
\newblock Quasi-stationary distribution for the langevin process in cylindrical
  domains, part i: existence, uniqueness and long-time convergence.
\newblock \emph{Stochastic Processes and their Applications}, 144:\penalty0
  173--201, 2022.

\bibitem[Li et~al.(2023)Li, Yao, Xia, and Tang]{li2023searching}
Y.~Li, Y.~Yao, Y.~Xia, and M.~Tang.
\newblock Searching for protein variants with desired properties using deep
  generative models.
\newblock \emph{BMC Bioinformatics}, 24\penalty0 (1):\penalty0 297, 2023.

\bibitem[Lu et~al.(2024)Lu, Szabados, and Yu]{lu2024structure}
H.~Lu, S.~Szabados, and Y.~Yu.
\newblock Structure preserving diffusion models.
\newblock \emph{arXiv preprint arXiv:2402.19369}, 2024.

\bibitem[Pavliotis(2014)]{pavliotis2014stochastic}
G.~A. Pavliotis.
\newblock Stochastic processes and applications.
\newblock \emph{Texts in Applied Mathematics}, 60, 2014.

\bibitem[Pedregosa et~al.(2011)Pedregosa, Varoquaux, Gramfort, Michel, Thirion,
  Grisel, Blondel, Prettenhofer, Weiss, Dubourg, Vanderplas, Passos,
  Cournapeau, Brucher, Perrot, and Duchesnay]{scikit-learn}
F.~Pedregosa, G.~Varoquaux, A.~Gramfort, V.~Michel, B.~Thirion, O.~Grisel,
  M.~Blondel, P.~Prettenhofer, R.~Weiss, V.~Dubourg, J.~Vanderplas, A.~Passos,
  D.~Cournapeau, M.~Brucher, M.~Perrot, and E.~Duchesnay.
\newblock Scikit-learn: Machine learning in {P}ython.
\newblock \emph{Journal of Machine Learning Research}, 12:\penalty0 2825--2830,
  2011.

\bibitem[Ramachandran et~al.(2017)Ramachandran, Zoph, and
  Le]{ramachandran2017searching}
P.~Ramachandran, B.~Zoph, and Q.~V. Le.
\newblock Searching for activation functions.
\newblock \emph{arXiv preprint arXiv:1710.05941}, 2017.

\bibitem[Rombach et~al.(2022)Rombach, Blattmann, Lorenz, Esser, and
  Ommer]{rombach2022high}
R.~Rombach, A.~Blattmann, D.~Lorenz, P.~Esser, and B.~Ommer.
\newblock High-resolution image synthesis with latent diffusion models.
\newblock In \emph{Proceedings of the IEEE/CVF conference on computer vision
  and pattern recognition}, pages 10684--10695, 2022.

\bibitem[Ronneberger et~al.(2015)Ronneberger, Fischer, and
  Brox]{ronneberger2015u}
O.~Ronneberger, P.~Fischer, and T.~Brox.
\newblock U-net: Convolutional networks for biomedical image segmentation.
\newblock In \emph{Medical image computing and computer-assisted
  intervention--MICCAI 2015: 18th international conference, Munich, Germany,
  October 5-9, 2015, proceedings, part III 18}, pages 234--241. Springer, 2015.

\bibitem[Rubinstein(2009)]{rubinstein2009simulation}
R.~Rubinstein.
\newblock \emph{Simulation and the Monte Carlo Method}.
\newblock Wiley Series in Probability and Statistics. Wiley, 2009.
\newblock ISBN 9780470317228.
\newblock URL \url{https://books.google.com/books?id=PUpdaQZsCK0C}.

\bibitem[Ruthotto and Haber(2021)]{ruthotto2021introduction}
L.~Ruthotto and E.~Haber.
\newblock An introduction to deep generative modeling.
\newblock \emph{GAMM-Mitteilungen}, 44\penalty0 (2):\penalty0 e202100008, 2021.

\bibitem[Salimans et~al.(2016)Salimans, Goodfellow, Zaremba, Cheung, Radford,
  and Chen]{salimans2016improved}
T.~Salimans, I.~Goodfellow, W.~Zaremba, V.~Cheung, A.~Radford, and X.~Chen.
\newblock Improved techniques for training gans.
\newblock \emph{Advances in neural information processing systems}, 29, 2016.

\bibitem[Shaw et~al.(2007)Shaw, Bridges, and Hobson]{shaw2007efficient}
J.~Shaw, M.~Bridges, and M.~Hobson.
\newblock Efficient bayesian inference for multimodal problems in cosmology.
\newblock \emph{Monthly Notices of the Royal Astronomical Society},
  378\penalty0 (4):\penalty0 1365--1370, 2007.

\bibitem[Singhal et~al.(2024)Singhal, Goldstein, and Ranganath]{singhals}
R.~Singhal, M.~Goldstein, and R.~Ranganath.
\newblock What’s the score? {A}utomated denoising score matching for
  nonlinear diffusions.
\newblock In \emph{Forty-first International Conference on Machine Learning},
  2024.

\bibitem[Sohl-Dickstein et~al.(2015)Sohl-Dickstein, Weiss, Maheswaranathan, and
  Ganguli]{sohl2015deep}
J.~Sohl-Dickstein, E.~Weiss, N.~Maheswaranathan, and S.~Ganguli.
\newblock Deep unsupervised learning using nonequilibrium thermodynamics.
\newblock In \emph{International conference on machine learning}, pages
  2256--2265. PMLR, 2015.

\bibitem[Song et~al.(2021)Song, Sohl-Dickstein, Kingma, Kumar, Ermon, and
  Poole]{song2020score}
Y.~Song, J.~Sohl-Dickstein, D.~P. Kingma, A.~Kumar, S.~Ermon, and B.~Poole.
\newblock Score-based generative modeling through stochastic differential
  equations.
\newblock In \emph{International Conference on Learning Representations}, 2021.

\bibitem[Tancik et~al.(2020)Tancik, Srinivasan, Mildenhall, Fridovich-Keil,
  Raghavan, Singhal, Ramamoorthi, Barron, and Ng]{tancik2020fourier}
M.~Tancik, P.~Srinivasan, B.~Mildenhall, S.~Fridovich-Keil, N.~Raghavan,
  U.~Singhal, R.~Ramamoorthi, J.~Barron, and R.~Ng.
\newblock Fourier features let networks learn high frequency functions in low
  dimensional domains.
\newblock \emph{Advances in neural information processing systems},
  33:\penalty0 7537--7547, 2020.

\bibitem[Teng et~al.(2024)Teng, Feng, Lu, Keshtegar, and
  Xue]{teng2024generative}
D.~Teng, Y.-W. Feng, C.~Lu, B.~Keshtegar, and X.-F. Xue.
\newblock Generative adversarial surrogate modeling framework for aerospace
  engineering structural system reliability design.
\newblock \emph{Aerospace Science and Technology}, 144:\penalty0 108781, 2024.

\bibitem[Vahdat et~al.(2021)Vahdat, Kreis, and Kautz]{vahdat2021score}
A.~Vahdat, K.~Kreis, and J.~Kautz.
\newblock Score-based generative modeling in latent space.
\newblock \emph{Advances in Neural Information Processing Systems},
  34:\penalty0 11287--11302, 2021.

\bibitem[Vincent(2011)]{vincent2011connection}
P.~Vincent.
\newblock A connection between score matching and denoising autoencoders.
\newblock \emph{Neural computation}, 23\penalty0 (7):\penalty0 1661--1674,
  2011.

\bibitem[Xiao et~al.(2022)Xiao, Kreis, and Vahdat]{xiao2021tackling}
Z.~Xiao, K.~Kreis, and A.~Vahdat.
\newblock Tackling the generative learning trilemma with denoising diffusion
  {GAN}s.
\newblock In \emph{International Conference on Learning Representations}, 2022.

\bibitem[Zhang and Katsoulakis(2023)]{zhang2023mean}
B.~J. Zhang and M.~A. Katsoulakis.
\newblock A mean-field games laboratory for generative modeling.
\newblock \emph{arXiv preprint arXiv:2304.13534}, 2023.

\bibitem[Zhang et~al.(2024)Zhang, Liu, Li, Katsoulakis, and
  Osher]{zhang2024wasserstein}
B.~J. Zhang, S.~Liu, W.~Li, M.~A. Katsoulakis, and S.~J. Osher.
\newblock Wasserstein proximal operators describe score-based generative models
  and resolve memorization.
\newblock \emph{arXiv preprint arXiv:2402.06162}, 2024.

\bibitem[Zhu et~al.(2019)Zhu, Zabaras, Koutsourelakis, and
  Perdikaris]{ZHU201956}
Y.~Zhu, N.~Zabaras, P.-S. Koutsourelakis, and P.~Perdikaris.
\newblock Physics-constrained deep learning for high-dimensional surrogate
  modeling and uncertainty quantification without labeled data.
\newblock \emph{Journal of Computational Physics}, 394:\penalty0 56--81, 2019.
\newblock ISSN 0021-9991.
\newblock \doi{https://doi.org/10.1016/j.jcp.2019.05.024}.
\newblock URL
  \url{https://www.sciencedirect.com/science/article/pii/S0021999119303559}.

\end{thebibliography}


\end{document}